\documentclass[11pt,a4paper]{article}

\usepackage[margin=1in]{geometry}
\usepackage{amsmath,amssymb,amsthm}
\usepackage{mathtools}
\usepackage{graphicx}
\usepackage{bm}
\usepackage[colorlinks=true,citecolor=blue,linkcolor=blue,urlcolor=blue]{hyperref}
\usepackage{xcolor}

\newtheorem{theorem}{Theorem}
\newtheorem{proposition}{Proposition}
\newtheorem{lemma}{Lemma}

\theoremstyle{definition}

\theoremstyle{remark}
\newtheorem*{remark}{Remark}

\newcommand{\KL}{D_{\mathrm{KL}}}
\newcommand{\E}{\mathbb{E}}
\newcommand{\R}{\mathbb{R}}
\newcommand{\PV}{P}
\newcommand{\QVz}{Q^{0}}
\newcommand{\QVm}{Q^{m}}
\newcommand{\QVbar}{\bar{Q}}
\newcommand{\NK}{N_K}
\newcommand{\NV}{N_V}
\newcommand{\Mfam}{\mathcal{M}}
\newcommand{\tr}{\operatorname{tr}}

\title{Information-Geometric Decomposition of\\
       Generalization Error in Unsupervised Learning}

\author{Gilhan Kim\thanks{Department of Physics and Astronomy,
Seoul National University, Seoul 08826, Republic of Korea, where
this work was performed.}
\thanks{Present address: Department of Statistics and Data Science,
Yonsei University, Seoul 03722, Republic of Korea.\\
Email: \texttt{gilhan.albert.kim@gmail.com}.}}

\date{}

\begin{document}
\maketitle

\begin{abstract}
\sloppy
\emergencystretch=3em
\noindent
We decompose the Kullback--Leibler generalization error
(GE)---the expected KL divergence from the data distribution to
the trained model---of unsupervised learning into three
non-negative components: model error, data bias, and variance.
The decomposition is exact for any e-flat model class and
follows from two identities of information geometry: the
generalized Pythagorean theorem and a dual e-mixture variance
identity. As an analytically tractable demonstration, we apply
the framework to $\epsilon$-PCA, a regularized principal
component analysis in which the empirical covariance is
truncated at rank $\NK$ and discarded directions are pinned at
a fixed noise floor $\epsilon$. Although rank-constrained
$\epsilon$-PCA is not itself e-flat, it admits a technical
reformulation with the same total GE on isotropic Gaussian
data, under which each component of the decomposition takes
closed form. The optimal rank emerges as the cutoff
$\lambda_{\mathrm{cut}}^{*} = \epsilon$---the model retains
exactly those empirical eigenvalues exceeding the noise
floor---with the cutoff reflecting a marginal-rate balance
between model-error gain and data-bias cost. A boundary
comparison further yields a three-regime phase
diagram---retain-all, interior, and collapse---separated by the
lower Marchenko--Pastur edge and an analytically computable
collapse threshold $\epsilon_{*}(\alpha)$, where $\alpha$ is
the dimension-to-sample-size ratio. All claims are verified
numerically.
\end{abstract}

\medskip
\noindent
\textbf{Keywords:}
Generalization error;
Kullback--Leibler divergence;
information geometry;
generalized Pythagorean theorem;
unsupervised learning;
random matrix theory;
Marchenko--Pastur distribution;
principal component analysis;
phase transitions.
\medskip

\section{Introduction}
\label{sec:intro}

Identifying the model complexity that minimizes the generalization error
(GE) is a central problem of statistical learning.
For supervised learning the standard guide is the bias--variance
tradeoff: increasing the model complexity reduces the bias of the
estimator at the cost of inflating its variance, and the optimal
complexity is reached at the balance between these two competing
contributions.
For unsupervised learning, where the goal is to estimate an entire
probability distribution rather than a conditional mean, an analogous
decomposition was missing until recently.

A natural setting in which to ask the question rigorously is
that of \emph{fully visible} generative models---models in
which every random variable is directly observed, with no
latent or hidden variables marginalised out. This class
includes fully visible Boltzmann machines, multivariate
Gaussian models such as $\epsilon$-PCA, naive-Bayes-type
models, and any visible-only Markov random field, and excludes
RBMs, variational autoencoders, and most modern deep generative
models. The strongest form of the decomposition holds when the
model manifold is e-flat (an exponential family in its natural
parameters); when the parametric constraint is non-linear in
those parameters---as for the rank-constrained $\epsilon$-PCA
class---the framework still defines the same three components
but their non-negativity is in general lost
(Proposition~\ref{prop:negative}). For $\epsilon$-PCA itself we
circumvent this obstruction by a technical e-flat
reformulation (Lemma~\ref{lem:rot-equiv}) that has the same
total GE on isotropic data, and apply the decomposition to that
reformulation.

In a previous joint work \cite{Kim2023JSTAT}, the present author and
collaborators proposed that the GE of unsupervised learning likewise
admits a two-component tradeoff,
\begin{equation}
  \mathrm{GE} \;=\; \mathrm{ME} \;+\; \mathrm{DE},
  \label{eq:JSTAT2decomp}
\end{equation}
between a \emph{model error} ME, which decreases as the model becomes
more expressive, and a \emph{data error} DE, which grows because a
finite training set is an imperfect representative of the true
distribution.
The decomposition was corroborated numerically for restricted
Boltzmann machines (RBMs) trained on the two-dimensional Ising model
and on the totally asymmetric simple exclusion process (TASEP), and
the resulting U-shaped GE curves yielded a non-trivial optimal model
size that grows with the complexity of the data.

The decomposition (\ref{eq:JSTAT2decomp}), however, was an empirical
observation. Two questions were left unanswered:
\begin{enumerate}
\item[(Q1)] Is the data error DE itself decomposable into more
            elementary contributions, in particular into a part that
            measures finite-sample bias and a part that measures
            stochasticity of training?
\item[(Q2)] Is there a class of models in which the decomposition can
            be derived from first principles, and in which the optimal
            model complexity can be computed in closed form?
\end{enumerate}

In this paper we answer both questions affirmatively, by placing
the problem inside the framework of information geometry
\cite{AmariNagaoka2000,Amari2016} and combining it with classical
random matrix theory. The information-geometric analysis of
statistical learning has a long history: the generalized
Pythagorean theorem and dual flatness underlie the modern
understanding of KL-divergence minimisation \cite{Csiszar1975}
and of maximum-likelihood estimation in exponential families
\cite{Amari2016}, while singular learning theory and the widely
applicable information criterion (WAIC) \cite{Watanabe2010}
provide a parallel asymptotic decomposition of the Bayes
generalization error in terms of the real log-canonical threshold
of the model. On the random-matrix side, the limiting empirical
spectrum of sample covariance matrices is governed by the
Marchenko--Pastur law
\cite{MarchenkoPastur1967,BaiSilverstein2010,Edelman1988},
and the optimal threshold for hard truncation of singular values
in low-rank denoising is the celebrated $4/\sqrt{3}$ rule of
\cite{GavishDonoho2014}, complemented in the spiked-covariance
setting by the BBP transition of
\cite{BaikBenArousPeche2005,Paul2007,BenaychGeorgesNadakuditi2011}.
Our
decomposition is closer in spirit to the bias--variance picture of
frequentist learning than to the Bayes-loss decomposition of
\cite{Watanabe2010}, while our $\epsilon$-PCA closed-form result is
the natural KL-divergence analogue, in a generative setting, of the
spectral truncation rules of
\cite{GavishDonoho2014,Veraart2016MPPCA}.
Our main contributions are:

\paragraph{(i) Closed-form optimal rank for $\epsilon$-PCA.}
We introduce $\epsilon$-PCA, a regularized principal component
analysis of zero-mean Gaussian data: the empirical covariance
is truncated at rank $\NK$ and the $\NV - \NK$ discarded
directions are pinned at a fixed noise floor $\epsilon > 0$.
For isotropic true covariance, the Marchenko--Pastur spectral
law identifies (Theorem~\ref{thm:optimal-cutoff}) the unique
interior local minimum of GE, given by the remarkably simple
cutoff condition
\begin{equation}
  \lambda_{\mathrm{cut}}^{*} \;=\; \epsilon ;
  \label{eq:optcut-intro}
\end{equation}
that is, the optimal model retains \emph{precisely those empirical
covariance eigenvalues that exceed the intrinsic noise floor
$\epsilon$}. The corresponding optimal rank
$\NK^{*}/\NV = \int_{\epsilon}^{\lambda_{+}(\alpha)}
p_{\mathrm{MP}}(\lambda;\alpha)\,d\lambda$ depends explicitly on
$\epsilon$ and on the aspect ratio $\alpha = \NV/D$.

\paragraph{(ii) Three-regime phase diagram of the optimum.}
Comparing the interior local minimum of (i) with the boundary value
at zero rank, we obtain (Proposition~\ref{prop:phase}) a sharp
three-regime phase diagram for the global optimum: a
\emph{retain-all} phase ($\NK^{*} = \NV$) for $\epsilon$ below the
lower Marchenko--Pastur edge, an \emph{interior} phase in which
$\NK^{*}$ decreases monotonically in $\epsilon$, and a
\emph{collapse} phase ($\NK^{*} = 0$) above an analytically
computable threshold $\epsilon_{*}(\alpha)$. The closed-form
prediction of (i) is verified numerically against direct
brute-force optimization in
Section~\ref{sec:epca-numerics}.

\paragraph{(iii) An information-geometric three-component
decomposition that supplies the proof.}
The closed-form results of (i) and (ii) arise from a general
decomposition of the unsupervised GE that we develop in parallel
and which is of independent interest. Theorem~\ref{thm:3decomp}
states that whenever the model manifold $\Mfam$ is an e-flat
submanifold of the space of probability distributions, the GE
satisfies the exact identity
\begin{align}
  \big\langle \KL(\PV \,\|\, \QVm) \big\rangle_{m}
  \;=\;
  & \underbrace{\KL(\PV \,\|\, \QVz)}_{\text{ME}}
    \;+\;
    \underbrace{\KL(\QVz \,\|\, \QVbar)}_{\text{Data bias}}
  \nonumber \\
  & \;+\;
    \underbrace{\big\langle \KL(\QVbar \,\|\, \QVm) \big\rangle_{m}}_{\text{Variance}},
  \label{eq:3decomp}
\end{align}
with each of the three terms a non-negative KL-type quantity;
$\QVz$ is the m-projection of $\PV$ onto $\Mfam$ and $\QVbar$
is the e-mixture of the trained models $\{\QVm\}_m$. The proof
is by two successive applications of identities from
information geometry: the generalized Pythagorean theorem
\cite[Thm.~3.8]{Amari2016} and a dual e-mixture variance
identity. It formalises the empirical two-component tradeoff of
\cite{Kim2023JSTAT}. The non-negativity of the data-bias term is
conditional on e-flatness; Proposition~\ref{prop:negative}
records that whenever the model manifold fails to be e-flat
(visible marginals of hidden-variable models, rank-constrained
Gaussian models such as $\epsilon$-PCA itself), the data bias
is no longer forced to be non-negative. To apply
Theorem~\ref{thm:3decomp} to $\epsilon$-PCA we therefore
introduce a technical reformulation
(Lemma~\ref{lem:rot-equiv}): replacing the trained
eigen-$\epsilon$-PCA model by a fixed-basis diagonal Gaussian
with the same eigenvalues yields a model on an e-flat
sub-family whose per-realization KL to $\PV$ on isotropic data
is unchanged. The decomposition then gives all three components
in closed form, and minimising their sum yields
Theorem~\ref{thm:optimal-cutoff}; the additive identity is
verified numerically to machine precision in
Fig.~\ref{fig:NKstar-eps}.

\medskip
The remainder of the paper is organized as follows.
Section~\ref{sec:prelim} fixes notation and recalls the elements
of information geometry that we use. Section~\ref{sec:decomp}
states and proves the three-component decomposition
(\ref{eq:3decomp}) and records the e-flatness diagnostic.
Section~\ref{sec:epca} contains the main result: it
develops the $\epsilon$-PCA model, introduces the technical
e-flat reformulation that brings the framework of
Section~\ref{sec:decomp} to bear on it
(Lemma~\ref{lem:rot-equiv}), derives the closed-form optimal
cutoff (\ref{eq:optcut-intro}) (Theorem~\ref{thm:optimal-cutoff})
as a corollary of the resulting decomposition, establishes the
three-regime phase structure of the optimal rank
(Proposition~\ref{prop:phase}), and verifies the closed-form
prediction numerically against brute-force optimization. Section~\ref{sec:it-interpretation}
collects two information-theoretic readings of the
decomposition (cross-entropy and rate-distortion), and
Section~\ref{sec:discussion} concludes.

\section{Preliminaries}
\label{sec:prelim}

\subsection{Notation}
Throughout the paper, $\PV$ denotes the unknown true
distribution on a measurable space $\mathcal{V}$, and
expectations $\langle \cdot \rangle_{m}$ are taken over the
randomness of the training datasets indexed by $m$. The Kullback--Leibler divergence is
$\KL(P\|Q) = \int P\,\log(P/Q)\,dv$ (logarithms are natural).
$H(P) = -\int P\log P\,dv$ denotes the differential or discrete
entropy. We use $\Mfam$ for the model manifold, e-flat sub-families
of an ambient exponential family being our main object of interest;
the natural and expectation parameters are denoted $\theta$ and
$\eta$, respectively, with cumulant function $\psi(\theta)$. For
multivariate-statistics conventions we follow
\cite{Anderson2003}, and for random matrices
\cite{BaiSilverstein2010}:
$\Sigma^{m} = D^{-1} X_m^{\top}X_m$ is the empirical covariance of
$D$ i.i.d.\ samples in $\R^{\NV}$, and the dimension-to-sample-size
ratio is $\alpha = \NV/D$.

\subsection{Generalization error of unsupervised learning}

A learning algorithm produces, from each training dataset, a
trained model $\QVm$ belonging to a parametric family $\Mfam$.
The (Kullback--Leibler) generalization error is
\begin{equation}
  \mathrm{GE} \;:=\; \big\langle \KL(\PV \,\|\, \QVm) \big\rangle_{m}.
  \label{eq:GEdef}
\end{equation}

\subsection{Exponential families and dual flatness}

Following \cite{Amari2016}, an exponential family
\begin{equation}
  Q_{V}(v;\theta)
  \;=\;
  \exp\!\Big[\, \theta^{i} F_{i}(v) \;-\; \psi(\theta) \,\Big]
  \label{eq:expfam}
\end{equation}
forms an \emph{e-flat} submanifold of the space of probability
distributions, with natural parameters $\theta^{i}$ as e-affine
coordinates and the cumulant function $\psi(\theta)$ as the potential.
The expectation parameters
$\eta_{i}(\theta) = \partial_{i}\psi(\theta)$ supply the dual,
m-affine coordinates.

A submanifold $\Mfam$ is \emph{e-flat} if it is described by linear
constraints in $\theta$, and \emph{m-flat} if it is described by linear
constraints in $\eta$.
For any e-flat $\Mfam$, the e-mixture
\begin{align}
  \QVbar
  &\;:=\;
  \exp\!\Big[\,\big\langle \log \QVm \big\rangle_{m} + \bar F\,\Big],
  \nonumber \\
  \bar F
  &\;:=\; -\log\!\int\!\exp\!\big\langle \log \QVm(v)\big\rangle_{m}\,dv ,
  \label{eq:emixture}
\end{align}
of any collection of points $\QVm \in \Mfam$ is itself a point of
$\Mfam$. This closure property is what makes the e-mixture, rather
than the arithmetic mean, the natural notion of average inside an
exponential family.

\subsection{Generalized Pythagorean theorem}

We will repeatedly use the following standard result
\cite[Thm.~3.8]{Amari2016}.

\begin{theorem}[Generalized Pythagorean theorem]
\label{thm:GPT}
Let $\Mfam$ be an e-flat submanifold and let $P$ be an arbitrary
distribution. Let $Q^{*} \in \Mfam$ be the m-projection of $P$ onto
$\Mfam$, i.e.,
\begin{equation*}
  Q^{*} \;=\; \arg\min_{Q \in \Mfam}\, \KL(P \,\|\, Q).
\end{equation*}
Then for every $R \in \Mfam$,
\begin{equation}
  \KL(P \,\|\, R)
  \;=\;
  \KL(P \,\|\, Q^{*})
  \;+\;
  \KL(Q^{*} \,\|\, R) .
  \label{eq:GPT}
\end{equation}
\end{theorem}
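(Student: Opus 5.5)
The plan is to compute the difference $\KL(P\|R)-\KL(P\|Q^{*})-\KL(Q^{*}\|R)$ directly and show it vanishes. The first step is to write it as a single integral. For $R,Q^{*}\in\Mfam$ with natural parameters $\theta_R,\theta_*$, expanding the logarithms makes the $P\log P$ and $Q^{*}\log Q^{*}$ terms cancel, leaving
\begin{equation*}
  \Delta \;:=\; \int \big(P - Q^{*}\big)\,\log\frac{Q^{*}}{R}\,dv .
\end{equation*}
Using the exponential-family form (\ref{eq:expfam}) of the ambient family, $\log(Q^{*}/R)=(\theta_*^{i}-\theta_R^{i})F_i(v)-\psi(\theta_*)+\psi(\theta_R)$. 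The constant term integrates to zero against $P-Q^{*}$, since both are normalized. Therefore
\begin{equation*}
  \Delta \;=\; (\theta_*-\theta_R)^{i}\,\big(\E_P[F_i]-\eta_i(\theta_*)\big).
\end{equation*}

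The second step uses the first-order optimality of the m-projection. E-flatness of $\Mfam$ means it is an affine subspace in $\theta$, so $\theta_*-\theta_R$ lies in its tangent (direction) space $T$. The map $\theta\mapsto\KL(P\|Q_\theta)=-\E_P[\theta^iF_i]+\psi(\theta)+\mathrm{const}$ is convex on $\Mfam$. Its gradient along any direction $u\in T$ is $u^{i}(\eta_i(\theta)-\E_P[F_i])$. At the minimizer $Q^{*}$, assumed to lie in the relative interior of $\Mfam$ with the natural parameter space open, this gradient vanishes for all $u\in T$. Taking $u=\theta_*-\theta_R$ gives $\Delta=0$, which is (\ref{eq:GPT}). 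Equivalently, $\E_P[F]-\eta(\theta_*)$ is orthogonal to $T$, so the m-geodesic from $P$ to $Q^{*}$ meets $\Mfam$ orthogonally.

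The main obstacle is technical rather than algebraic. We need $\E_P[F_i]$ to be finite, so that $\KL(P\|R)<\infty$ for every $R\in\Mfam$. We also need to justify differentiating under the integral, and we need the minimum to be attained at an interior point where the stationarity condition holds. For $\Mfam$ in the exponential family these follow from smoothness and convexity of $\psi$ and openness of the natural parameter domain. I would state them as standing regularity assumptions. In the degenerate case where $\KL(P\|Q)=\infty$ throughout $\Mfam$, both sides of (\ref{eq:GPT}) are infinite and the identity holds trivially.
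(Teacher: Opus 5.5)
Your proof is correct. The paper does not prove this statement; it quotes it from \cite[Thm.~3.8]{Amari2016}. The relevant comparison is therefore with the standard dually-flat argument behind that citation, and your computation is essentially that argument written out explicitly. Your $\Delta$ is the three-point (Bregman) identity $\KL(P\|R)=\KL(P\|Q^{*})+\KL(Q^{*}\|R)+(\theta_*-\theta_R)^{i}\bigl(\E_P[F_i]-\eta_i(\theta_*)\bigr)$. Stationarity of the m-projection along the direction space $T$ of the e-flat $\Mfam$ kills the cross term; this is the orthogonality of the m-geodesic to $\Mfam$.

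Your route gives two things the bare citation does not. First, you work with integrals rather than with three points of one flat manifold. This makes visible that $P$ enters only through $\E_P[F]$, so the identity holds for a genuinely arbitrary $P$ outside the ambient exponential family, which is how the paper states the theorem and invokes it in Theorem~\ref{thm:3decomp}. The dually-flat formulation needs an extra step to reach such a $P$, such as enlarging the family or precisely your moment argument. Second, you make explicit hypotheses the paper leaves implicit: finite $\E_P[F]$, an open natural-parameter domain, and attainment of the minimum. Note that attainment is a hypothesis of the statement, not a consequence of smoothness of $\psi$. Given attainment, openness makes the minimizer automatically relatively interior, so stationarity holds. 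Convexity of $\theta\mapsto\KL(P\|Q_\theta)$ is not actually used; stationarity at the minimizer is all you need.
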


The two terms on the right-hand side of (\ref{eq:GPT}) are the squared
``orthogonal'' and ``along-manifold'' components, respectively, of the
divergence from $P$ to $R$.

\section{Three-component decomposition of the generalization error}
\label{sec:decomp}

\subsection{Statement and proof}

Throughout this section, $\Mfam$ denotes an e-flat submanifold of the
space of probability distributions on the visible variables, and we
write
\begin{align}
  \QVz &\;:=\; \arg\min_{Q \in \Mfam}\, \KL(\PV \,\|\, Q),
  \nonumber \\
  \QVbar &\;:=\; \exp\!\Big[\big\langle\log \QVm\big\rangle_{m} + \bar F\Big],
  \label{eq:QVzQVbar}
\end{align}
for the m-projection of $\PV$ onto $\Mfam$ and for the e-mixture of
the trained models, respectively.

\begin{theorem}[Three-component decomposition]
\label{thm:3decomp}
Assume that the trained models $\{\QVm\}_{m}$ all belong to the
e-flat submanifold $\Mfam$. Then
\begin{align}
  \big\langle \KL(\PV \,\|\, \QVm) \big\rangle_{m}
  \;=\;
  & \underbrace{\KL(\PV \,\|\, \QVz)}_{=:\,\mathrm{ME}}
  \;+\;
  \underbrace{\KL(\QVz \,\|\, \QVbar)}_{=:\,\mathrm{Data\ bias}}
  \nonumber \\
  & \;+\;
  \underbrace{\big\langle \KL(\QVbar \,\|\, \QVm) \big\rangle_{m}}_{=:\,\mathrm{Variance}}.
  \label{eq:3decomp-thm}
\end{align}.
\end{theorem}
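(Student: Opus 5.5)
The proof applies the generalized Pythagorean theorem once for each $m$, averages over $m$, and then proves a separate identity for the averaged along-manifold term. Since every $\QVm$ lies in the e-flat $\Mfam$ and $\QVz$ is the m-projection of $\PV$ onto $\Mfam$, Theorem~\ref{thm:GPT} with $R=\QVm$ gives $\KL(\PV\|\QVm)=\KL(\PV\|\QVz)+\KL(\QVz\|\QVm)$ for every $m$. Averaging over $m$ yields
\begin{equation*}
  \big\langle \KL(\PV\|\QVm)\big\rangle_m = \mathrm{ME} + \big\langle \KL(\QVz\|\QVm)\big\rangle_m ,
\end{equation*}
so it remains to prove the dual e-mixture variance identity
\begin{equation*}
  \big\langle \KL(\QVz\|\QVm)\big\rangle_m = \KL(\QVz\|\QVbar) + \big\langle \KL(\QVbar\|\QVm)\big\rangle_m .
\end{equation*}

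\textbf{Step 1: expand the averaged divergence.} By the definition (\ref{eq:emixture}), $\langle \log\QVm\rangle_m = \log\QVbar - \bar F$ pointwise in $v$. Hence
\begin{equation*}
  \big\langle \KL(\QVz\|\QVm)\big\rangle_m
  = \int \QVz\log\QVz - \int\QVz\big(\log\QVbar-\bar F\big)
  = \KL(\QVz\|\QVbar) + \bar F .
\end{equation*}
Here I interchange $\langle\cdot\rangle_m$ with integration against $\QVz$; this needs Fubini, which holds if the divergences are finite, and I will state that as a standing assumption.

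\textbf{Step 2: identify $\bar F$ as the variance term.} Applying the same computation with $\QVz$ replaced by $\QVbar$ gives
\begin{equation*}
  \big\langle\KL(\QVbar\|\QVm)\big\rangle_m = \int\QVbar\log\QVbar-\int\QVbar(\log\QVbar-\bar F)=\bar F .
\end{equation*}
Substituting this into Step 1 yields exactly the dual identity. Notably, this step does not use e-flatness at all; it is simply the defining property of the normalized geometric mean.

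\textbf{Step 3: where e-flatness enters.} E-flatness is needed only so that $\QVbar\in\Mfam$. This guarantees that the middle term is a divergence between two points of $\Mfam$, which is what gives it the interpretation as the data bias and what Proposition~\ref{prop:negative} later contrasts with the non-e-flat case. Each term is non-negative, being a KL divergence or an average of KL divergences. I also note $\bar F\ge 0$: by Jensen's inequality (AM--GM pointwise), $\int\exp\langle\log\QVm\rangle_m\le 1$.

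\textbf{Main obstacle.} The only delicate point is rigor rather than algebra. I must ensure that $\bar F$ is finite, i.e.\ that the geometric-mean normalizer is positive, and that the averages over $m$ commute with the integrals. In the exponential-family setting this is straightforward. There $\langle\log\QVm\rangle_m=\langle\theta_m\rangle\cdot F-\langle\psi(\theta_m)\rangle$, so $\QVbar$ is the family member with natural parameter $\langle\theta_m\rangle$, which lies in $\Mfam$ by linearity. Moreover $\bar F=\langle\psi(\theta_m)\rangle-\psi(\langle\theta_m\rangle)$, which is finite whenever $\langle\theta_m\rangle$ lies in the natural parameter domain and $\psi(\theta_m)$ is integrable. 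I would state these integrability conditions explicitly and then conclude.
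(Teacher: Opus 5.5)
Your proof is correct, but it uses the same two ingredients as the paper in the opposite order.

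\textbf{The paper's route.} It first proves the e-mixture identity with reference point $\PV$, namely $\langle\KL(\PV\|\QVm)\rangle_m=\KL(\PV\|\QVbar)+\langle\KL(\QVbar\|\QVm)\rangle_m$. It then invokes the Pythagorean theorem once, with $R=\QVbar$; this is where it needs $\QVbar\in\Mfam$.

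\textbf{Your route.} You apply the Pythagorean theorem separately for each $m$ with $R=\QVm$, which is legitimate since $\QVm\in\Mfam$ by hypothesis. You then average and use the same e-mixture identity with reference point $\QVz$.

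\textbf{What your route buys.} Your argument never needs $\QVbar\in\Mfam$ for the identity itself. So your Step~3 claim that e-flatness is needed ``only so that $\QVbar\in\Mfam$'' is inaccurate for your own proof. There, e-flatness enters through the Pythagorean theorem at each $\QVm$, and $\QVbar\in\Mfam$ only supplies the on-manifold interpretation of the middle term. A useful byproduct is the intermediate form $\mathrm{GE}=\mathrm{ME}+\langle\KL(\QVz\|\QVm)\rangle_m$. It shows the earlier two-component data error as a manifestly non-negative on-manifold average, whose bias/variance split is a flatness-free identity.

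\textbf{What the paper's ordering buys.} Its intermediate identity $\mathrm{GE}=\KL(\PV\|\QVbar)+\mathrm{Variance}$ holds with no e-flatness at all. The paper relies on exactly this afterwards to say the algebraic data bias still satisfies the additive identity in the non-e-flat setting of Proposition~\ref{prop:negative}. Your first step already uses e-flatness, so it gives nothing there.

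\textbf{Why the two orderings agree.} The Pythagorean defect $\KL(\PV\|R)-\KL(\PV\|\QVz)-\KL(\QVz\|R)=\int(\PV-\QVz)(\log\QVz-\log R)\,dv$ is affine in $\log R$. The normalizer drops out because $\int(\PV-\QVz)\,dv=0$. Hence averaging your per-$m$ identities reproduces the paper's identity at $\QVbar$.

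Your integrability discussion goes beyond the paper, which takes these points for granted. This includes $\bar F=\langle\psi(\theta_m)\rangle_m-\psi(\langle\theta_m\rangle_m)\ge 0$, finiteness of the normalizer, and the Fubini interchange.
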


The three terms in (\ref{eq:3decomp-thm}) capture three
qualitatively distinct sources of error, and it is useful to
visualise them separately.

\begin{itemize}
\item[$\bullet$]
  \textbf{Model error}
  $\mathrm{ME} = \KL(\PV\|\QVz)$ is the \emph{irreducible} error.
  It is the divergence from the true distribution $\PV$ down to
  the closest point $\QVz$ of the model manifold $\Mfam$, and
  would persist even with infinitely many training samples and
  an oracle learner. ME measures how badly $\Mfam$ misspecifies
  $\PV$; it depends only on the geometry of $\Mfam$ and not on
  any particular dataset, and shrinks only by enriching the
  model class.
\item[$\bullet$]
  \textbf{Data bias}
  $\mathrm{Data\ bias} = \KL(\QVz\|\QVbar)$ is the
  \emph{systematic} error introduced by the finite-data training
  process. It measures how far the typical (e-mixture-averaged)
  trained model $\QVbar$ sits from the m-projection $\QVz$ that
  an ideal infinite-data learner would recover. It is a
  property of the algorithm--data interaction averaged over
  dataset randomness, not of any single realisation, and
  vanishes only when the learning procedure is consistent in
  the e-mixture sense ($\QVbar \to \QVz$ as $D\to\infty$).
\item[$\bullet$]
  \textbf{Variance}
  $\mathrm{Variance} = \langle\KL(\QVbar\|\QVm)\rangle_{m}$ is
  the \emph{stochastic} error. It is the average dispersion of
  the trained models $\QVm$ around their centroid $\QVbar$ and
  captures dataset-to-dataset fluctuation of the learner. It is
  independent of how well $\Mfam$ fits $\PV$ and is controlled
  purely by the size and randomness of the training set.
\end{itemize}

Geometrically, ME is an \emph{off-manifold} divergence from
$\PV$ down to $\Mfam$, while data bias and variance are both
\emph{on-manifold} divergences along $\Mfam$---the former
between two distinguished points $\QVz$ and $\QVbar$, the
latter a spread of the random $\QVm$ around $\QVbar$. Each
component responds to a different lever: ME to model capacity,
data bias to algorithmic consistency, and variance to sample
size. A schematic visualization of these three distances is
shown in Figure~\ref{fig:manifolds}.

\begin{proof}
Because $\Mfam$ is e-flat, the e-mixture $\QVbar$ defined in
(\ref{eq:QVzQVbar}) belongs to $\Mfam$. By
Theorem~\ref{thm:GPT} applied with $P = \PV$, $Q^{*} = \QVz$, and
$R = \QVbar$,
\begin{equation}
  \KL(\PV \,\|\, \QVbar)
  \;=\;
  \KL(\PV \,\|\, \QVz)
  \;+\;
  \KL(\QVz \,\|\, \QVbar) .
  \label{eq:GPT-1}
\end{equation}

Next we establish the e-mixture variance identity
\begin{align}
  \big\langle \KL(\PV \,\|\, \QVm) \big\rangle_{m}
  \;=\;\;
  & \KL(\PV \,\|\, \QVbar)
  \nonumber \\
  & \;+\;
  \big\langle \KL(\QVbar \,\|\, \QVm) \big\rangle_{m} ,
  \label{eq:emixture-var}
\end{align}
which holds for any reference distribution $\PV$, requires no
e-flatness hypothesis, and is a direct consequence of the
definition (\ref{eq:emixture}) of $\QVbar$.
The identity (\ref{eq:emixture-var}) is the dual analogue, in
information geometry, of the classical bias--variance decomposition
of the mean squared error around its mean: it expresses the average
divergence from $\PV$ to the random trained models as the
divergence from $\PV$ to their e-mixture centroid plus the
fluctuation of the trained models around that centroid
\cite[\S 2.6 and \S 3.6]{Amari2016}.
Indeed, denoting by $\E_R[\cdot] := \int R(v)\,(\cdot)\,dv$ the
expectation under a distribution $R$, the e-mixture identity
$\langle\log\QVm\rangle_{m} = \log\QVbar - \bar F$ gives, after
taking $\E_{\PV}[\cdot]$ on both sides and rearranging,
\begin{align*}
  \big\langle \KL(\PV \,\|\, \QVm) \big\rangle_{m}
  &= \E_{\PV}\!\left[\log\PV\right]
     - \E_{\PV}\!\left[\langle\log\QVm\rangle_{m}\right]
  \\
  &= \E_{\PV}\!\left[\log\PV\right]
     - \E_{\PV}\!\left[\log\QVbar\right] + \bar F
  \\
  &= \KL(\PV \,\|\, \QVbar) \;+\; \bar F .
\end{align*}
A parallel computation, taking $\E_{\QVbar}[\cdot]$ instead, gives
$\langle \KL(\QVbar \,\|\, \QVm) \rangle_{m} = \bar F$.
(In particular, $\bar F \geq 0$, since the left-hand side is a
non-negative average of KL divergences.) Subtracting these two
identities eliminates $\bar F$ and yields (\ref{eq:emixture-var}).

Substituting (\ref{eq:GPT-1}) into (\ref{eq:emixture-var}) gives
(\ref{eq:3decomp-thm}). Non-negativity of all three terms follows
because each is a Kullback--Leibler divergence.
\qed
\end{proof}

\begin{figure}[!t]
  \centering
  \includegraphics[width=\linewidth]{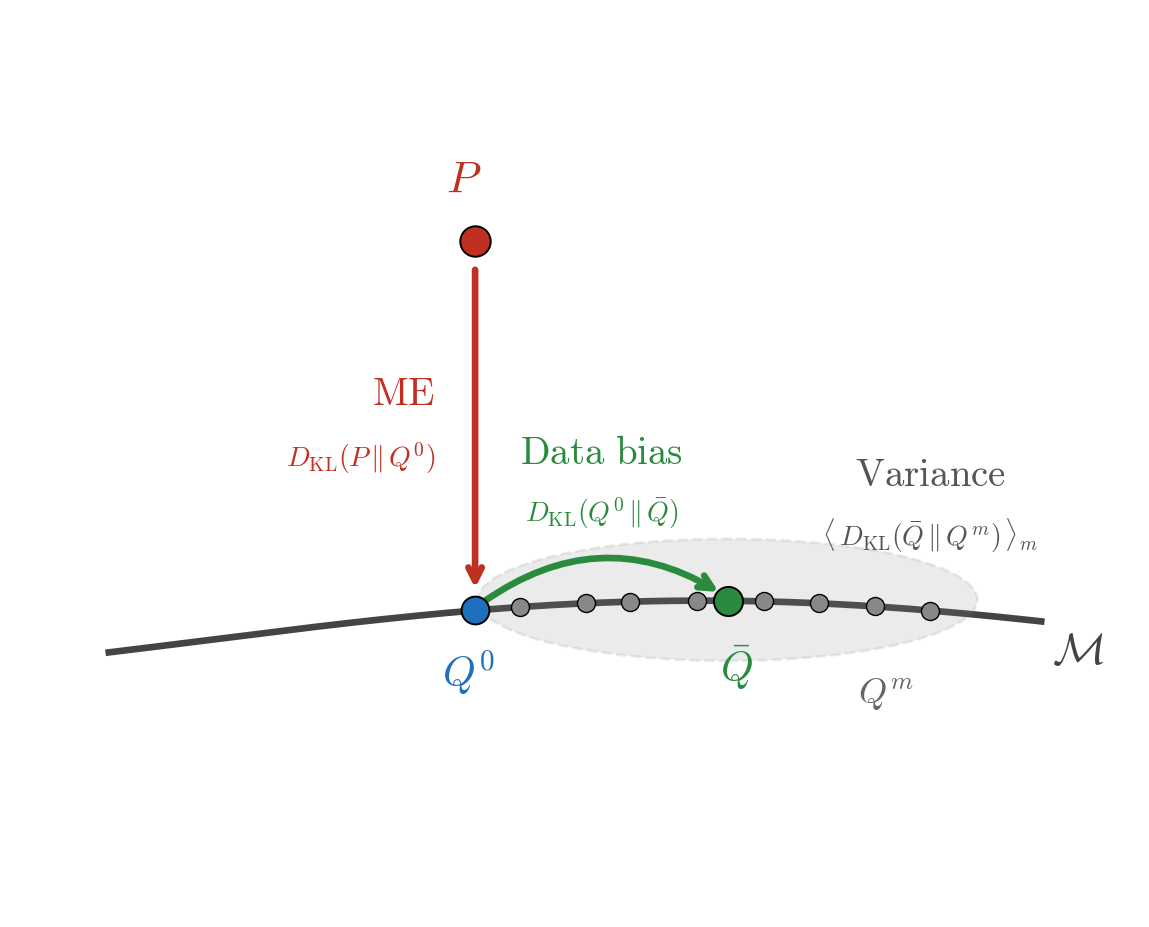}
  \caption{Schematic visualization of the three-component
  decomposition of Theorem~\ref{thm:3decomp} on an e-flat model
  manifold $\Mfam$. The trained models $\{\QVm\}_m$ are scattered
  along $\Mfam$, and their e-mixture $\QVbar$ (green) lies among
  them; the m-projection $\QVz$ (blue) is the closest point of
  $\Mfam$ to the true distribution $\PV$ (red). The three
  components are then three KL-type distances around this
  configuration: the \emph{model error}
  $\mathrm{ME} = \KL(\PV\|\QVz)$ (red arrow) is the off-manifold
  divergence from $\PV$ down to $\Mfam$; the \emph{data bias}
  $\KL(\QVz\|\QVbar)$ (green arrow) is the on-manifold divergence
  from $\QVz$ to the e-mixture centroid; and the
  \emph{variance} $\langle\KL(\QVbar\|\QVm)\rangle_{m}$ (blue
  dashed fan) is the average on-manifold dispersion of the
  trained models around the centroid.}
  \label{fig:manifolds}
\end{figure}

\begin{remark}
Equation (\ref{eq:emixture-var}) makes precise the sense in which the
e-mixture $\QVbar$ is the ``correct'' average for an exponential
family: it is the unique point of the ambient space for which the GE
splits as a sum of a deterministic part $\KL(\PV\|\QVbar)$ and a
fluctuation part $\langle\KL(\QVbar\|\QVm)\rangle_{m}$ that captures
the dataset-to-dataset variability of the trained model. The
e-flatness assumption is needed only for the further decomposition
(\ref{eq:GPT-1}) of the deterministic part.
\end{remark}

\begin{remark}[Two equivalent forms of the data bias]
\label{rem:two-forms}
Two natural expressions for the data bias appear in this work and
in the prior literature:
\begin{align}
  \mathrm{Data\ bias}_{\text{alg}}
  &\;:=\;
  \int \PV(v)\,\log\frac{\QVz(v)}{\QVbar(v)}\,dv
  \nonumber \\
  &\;=\;
  \KL(\PV\|\QVbar) - \KL(\PV\|\QVz),
  \label{eq:databias-alg}
\end{align}
which is a purely algebraic identity in $\PV$, $\QVz$, and
$\QVbar$ and refines the data-error term $\mathrm{DE} :=
\langle\KL(\PV\|\QVm)\rangle_{m} - \KL(\PV\|\QVz)$ of
\cite{Kim2023JSTAT}: by the e-mixture variance identity
(\ref{eq:emixture-var}), one has
$\mathrm{DE} = \mathrm{Data\ bias}_{\text{alg}} +
\mathrm{Variance}$, so the present three-component decomposition
splits the prior two-component data error into a bias and a
variance contribution. The second form of the data bias is
\begin{equation}
  \mathrm{Data\ bias}_{\text{GPT}}
  \;:=\; \KL(\QVz\|\QVbar),
  \label{eq:databias-gpt}
\end{equation}
which appears in the statement of
Theorem~\ref{thm:3decomp}. Under the e-flatness hypothesis of
Theorem~\ref{thm:3decomp}, the generalized Pythagorean identity
(\ref{eq:GPT-1}) gives $\KL(\PV\|\QVbar) = \KL(\PV\|\QVz) +
\KL(\QVz\|\QVbar)$; rearranging and comparing with
(\ref{eq:databias-alg}) yields
$\mathrm{Data\ bias}_{\text{alg}} = \KL(\QVz\|\QVbar) =
\mathrm{Data\ bias}_{\text{GPT}}$, so the two forms coincide
and the decomposition (\ref{eq:3decomp-thm}) holds with either.
The two forms differ when the e-flatness hypothesis fails, however:
the algebraic form (\ref{eq:databias-alg}) continues to satisfy the
additive identity
$\mathrm{GE} = \mathrm{ME} + \mathrm{Data\ bias}_{\text{alg}} +
\mathrm{Variance}$ as a consequence of (\ref{eq:emixture-var}) alone,
while the KL form (\ref{eq:databias-gpt}) does not. The
numerical experiments of Section~\ref{sec:epca-numerics} use
the algebraic form (\ref{eq:databias-alg}) so that the additive
identity holds exactly to machine precision; under the e-flat
hypothesis (which holds for the technical reformulation of
Section~\ref{sec:variant4} but not for eigen-$\epsilon$-PCA
itself) the two forms agree.
\end{remark}

A direct numerical verification of (\ref{eq:3decomp-thm}) on
the e-flat reformulation of $\epsilon$-PCA introduced in
Section~\ref{sec:variant4}, in which all three components are
computed in closed form and compared to the empirical
generalization error to machine precision, is given in
Section~\ref{sec:epca-numerics} and Figure~\ref{fig:NKstar-eps}.

\subsection{Failure of e-flatness and the obstruction}
\label{sec:obstruction}

Of the three components in (\ref{eq:3decomp-thm}), the model error
and the variance are each manifestly non-negative because they are
written from the start as Kullback--Leibler divergences. The data
bias is more delicate: as Remark~\ref{rem:two-forms} discusses,
its algebraic form (\ref{eq:databias-alg}) is a difference of two
KL divergences and is therefore manifestly sign-indefinite, and
its non-negativity in Theorem~\ref{thm:3decomp} (where it equals
$\KL(\QVz\|\QVbar)$) is not built into the definition---it is a
\emph{derived consequence} of the generalized Pythagorean theorem,
holding only when the e-flatness of $\Mfam$ allows the GPT to be
applied. We now show that for models with hidden variables this
hypothesis can fail, and that the algebraic data bias
(\ref{eq:databias-alg}) can indeed take negative values.

Consider an exponential family on the joint space of visible and
hidden variables,
\begin{equation}
  Q_{VH}(v,h;\theta)
  \;=\;
  \exp\!\Big[\,\theta^{i} F_{i}(v,h) - \psi(\theta)\,\Big],
  \label{eq:joint-expfam}
\end{equation}
and its visible marginal, obtained by integrating out (or
summing over) the hidden variables,
\begin{equation}
  Q_{V}(v;\theta) \;=\; \int Q_{VH}(v,h;\theta)\,dh.
  \label{eq:visible-marginal}
\end{equation}
The joint family (\ref{eq:joint-expfam}) is e-flat in the joint
space, but its visible-marginal family
(\ref{eq:visible-marginal}) is in general \emph{not} an
exponential family in $v$ \cite[\S 2.4 and \S 8.1]{Amari2016}:
the marginalization mixes the natural parameters in a
non-affine way.

\begin{proposition}
\label{prop:negative}
Let $\Mfam_{V}^{\mathrm{hid}} := \{ Q_{V}(\cdot;\theta) :
\theta \in \Theta \}$ be the visible-marginal family of an
exponential family with hidden variables, as in
(\ref{eq:visible-marginal}).
\begin{enumerate}
\item[(a)] $\Mfam_{V}^{\mathrm{hid}}$ is generally not e-flat.
\item[(b)] Because the generalized Pythagorean theorem
(Theorem~\ref{thm:GPT}) requires e-flatness, the
m-projection identity used in the proof of
Theorem~\ref{thm:3decomp} need not hold on
$\Mfam_{V}^{\mathrm{hid}}$. The algebraic data bias
(\ref{eq:databias-alg}) is then no longer constrained by any
KL-type inequality and can take a negative value: the
m-projection $\QVz$ onto $\Mfam_{V}^{\mathrm{hid}}$ may be a
strictly worse fit to $\PV$ than the e-mixture $\QVbar$, which
itself generically does not lie on
$\Mfam_{V}^{\mathrm{hid}}$.
\end{enumerate}
\end{proposition}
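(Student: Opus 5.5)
Because the statement is existential ("generally not e-flat", "can take a negative value"), I will prove it with one explicit counterexample and then fold in the structural argument for (b).

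\textbf{Part (a).} I will use the smallest hidden-variable family available: one binary visible unit $v\in\{0,1\}^2$ (or a pair of visible bits) and one binary hidden unit $h$, with a joint exponential family whose sufficient statistics include an interaction $v_i h$. This makes $Q_{V}$ a two-component mixture of product distributions. The visible marginal has the form $\log Q_V(v;\theta) = \sum_i b_i v_i + \log(1+e^{c+\sum_i w_i v_i}) - \psi$. A family on a finite space is e-flat exactly when the span of $\{\log Q_V(\cdot;\theta) - \log Q_V(\cdot;\theta_0)\}$, modulo constants, is a fixed linear space of the appropriate dimension, and the family then equals the whole affine set. I will exhibit two parameter values whose e-geodesic midpoint $\propto \exp[\tfrac12\log Q_1 + \tfrac12\log Q_2]$ cannot be written in softplus form. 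The test is a nonzero pairwise log-interaction $\log Q(11)+\log Q(00)-\log Q(10)-\log Q(01)$ whose sign or magnitude is unattainable. The quickest such choice is the one-hidden-unit RBM on two visible bits. Its log-interaction is $\mathrm{sp}(c{+}w_1{+}w_2)+\mathrm{sp}(c)-\mathrm{sp}(c{+}w_1)-\mathrm{sp}(c{+}w_2)$, where $\mathrm{sp}(x)=\log(1+e^x)$. This quantity is non-negative whenever $w_1,w_2$ have the same sign, so I expect the set of achievable interactions to be non-convex in an explicit way. If that is awkward, I will count dimensions instead: a mixture family with too few parameters and a curved image cannot be e-flat, which can be checked from a nonvanishing second derivative of the log-interaction along a parameter line.

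\textbf{Part (b).} The first claim is immediate once (a) holds, because Theorem~\ref{thm:GPT} is stated only for e-flat $\Mfam$. There is also a structural point: for non-e-flat $\Mfam$, $\QVbar$ need not lie in $\Mfam$, so nothing forces $\KL(\PV\|\QVz)\le\KL(\PV\|\QVbar)$. For the negativity claim I will construct the counterexample directly in the model from (a). First I take $\PV \notin \Mfam$. Next I choose two members $Q^{1},Q^{2}\in\Mfam$, each with probability $1/2$, to play the role of the trained models $\QVm$; the statement concerns the sign the identity permits, so any realisable collection suffices. Their e-mixture $\QVbar$ then fills in the missing log-interaction. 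If $\PV$ is chosen close to $\QVbar$, or equal to it, then $\KL(\PV\|\QVbar)$ is $0$ or near $0$. Since $\PV\notin\Mfam$ and $\Mfam$ is closed, $\KL(\PV\|\QVz)>0$. Hence $\mathrm{Data\ bias}_{\mathrm{alg}} = \KL(\PV\|\QVbar)-\KL(\PV\|\QVz)<0$ by (\ref{eq:databias-alg}). The argument works for any non-e-flat $\Mfam$ whose e-hull contains points off $\Mfam$, which is precisely what (a) supplies.

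The main obstacle is rigour in (a). I must show that the midpoint genuinely lies outside the family, not merely outside a given parametrisation, which requires characterising the achievable log-interactions of the chosen hidden-variable model. I will first try the two-visible-bit RBM and argue through the sign structure of the softplus second difference (it has the sign of $w_1w_2$). I will pick $Q^{1},Q^{2}$ that share the same visible biases but have mismatched hidden weights, so that the averaged log-probabilities produce an interaction pattern with the wrong sign/magnitude combination relative to the induced marginals. A small numerical check of $\KL(\PV\|\QVz)>0$ would back this up if the closed-form inequality becomes messy.
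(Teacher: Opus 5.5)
\textbf{The gap is in part (a): the example you single out is e-flat.} The visible marginal of a one-hidden-unit RBM on two bits, $Q_V(v)\propto e^{b\cdot v}\bigl(1+e^{c+w\cdot v}\bigr)$, is exactly the set of mixtures of the two product distributions with natural parameters $b$ and $b+w$, with any positive weight (set by $c$). Every strictly positive $2\times 2$ table is a sum of two strictly positive rank-one tables, so every positive distribution on $\{0,1\}^2$ arises. Equivalently, your log-interaction $\mathrm{sp}(c+w_1+w_2)+\mathrm{sp}(c)-\mathrm{sp}(c+w_1)-\mathrm{sp}(c+w_2)$ sweeps all of $\R$. Taking $w_1=w_2=w$, $c=-w$ gives $\mathrm{sp}(w)+\mathrm{sp}(-w)-2\log 2\to+\infty$. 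Taking $w_1=-w_2=w$, $c=0$ gives $2\log 2-\mathrm{sp}(w)-\mathrm{sp}(-w)\to-\infty$. The biases $b$ then reach every table with a given interaction. So this family is the whole open simplex on $\{0,1\}^2$, i.e.\ the saturated exponential family with statistics $v_1,v_2,v_1v_2$. By your own span criterion it is e-flat, and the e-midpoint of any two members stays inside it. No ``unattainable sign or magnitude'' exists: the sign of $w_1w_2$ restricts nothing, because both signs are available. Your fallback does not rescue this either. A nonvanishing second derivative of the log-interaction along a parameter line measures curvature of the parametrisation, not of the image, and this very model passes that test while being e-flat.

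What you need is a hidden-variable family whose visible image is a proper, curved subset, together with one explicit pair of members whose normalised geometric mean leaves the closure of the family. The simplest choice freezes every joint parameter except the hidden bias, so the visible family is the mixture segment $\{(1-\pi)p_0+\pi p_1\}$. For instance, take three visible states with $p_0=(0.6,0.2,0.2)$ and $p_1=(0.2,0.2,0.6)$. Every member and every limit point has middle coordinate $0.2$. The e-midpoint of the members $(0.5,0.2,0.3)$ and $(0.3,0.2,0.5)$ has middle coordinate $0.2/(0.2+2\sqrt{0.15})\approx 0.205$, so it lies off the closure.

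With such a witness your part (b) goes through, and it improves on the paper's sketch. The paper derives (b) ``immediately'' from (a), which only shows that Theorem~\ref{thm:GPT} cannot be invoked. Your choice $\PV=\QVbar$ actually produces $\mathrm{Data\ bias}_{\text{alg}}=-\KL(\PV\|\QVz)<0$; in the example, $\QVz=(0.4,0.2,0.4)$ by symmetry and convexity in $\pi$. One correction: replace ``$\Mfam$ is closed'' (visible-marginal families usually are not) by ``$\QVbar$ lies outside the closure of $\Mfam$''. That is the condition that makes $\KL(\PV\|\QVz)$ strictly positive.
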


\begin{proof}[Sketch]
Statement (a) is the well-known fact that marginalization does not,
in general, preserve exponential-family structure
\cite[\S 8]{Amari2016}. An explicit class of examples is provided
by restricted Boltzmann machines \cite{Smolensky1986,Hinton2002}:
even though the joint distribution $Q_{VH}$ on visible and hidden
units is an exponential family in the natural parameters, the
visible marginal $Q_{V}(v) = \sum_{h} Q_{VH}(v,h)$ is in general a
mixture of products of Bernoulli factors (one mixture component per
hidden configuration) and cannot be written in the form
(\ref{eq:expfam}). The exception is the special case in which the
hidden conditional decouples in a particular way; this exceptional
case characterizes the so-called \emph{exponential family
harmoniums} of \cite{WellingRosenZviHinton2005}, and lies at the
boundary of the obstruction discussed here.
Statement (b) then follows immediately from (a) and the
e-flatness hypothesis of Theorem~\ref{thm:GPT}.
\qed
\end{proof}

\begin{remark}[Joint vs.\ visible loss]
\label{rem:joint-vs-visible}
Proposition~\ref{prop:negative} should not be read as the
statement that hidden-variable models are themselves non-e-flat.
The joint family $\{Q_{VH}(\cdot;\theta)\}$ is, by construction
(\ref{eq:joint-expfam}), an exponential family in $(v, h)$ and
hence e-flat in the joint space; Theorem~\ref{thm:3decomp}
applies to it without any modification, provided one defines the
generalization error in the joint space,
$\langle\KL(P_{VH}\,\|\,Q_{VH}^{m})\rangle_{m}$, with respect
to some reference joint distribution $P_{VH}$. The obstruction
arises only when the GE is defined---as is standard in the
unsupervised learning literature---in the \emph{visible} space,
$\langle\KL(\PV\,\|\,Q_{V}^{m})\rangle_{m}$, because the
effective model class is then the visible-marginal family
$\Mfam_{V}^{\mathrm{hid}}$, which is non-e-flat. In short: the
framework's failure for hidden-variable models is a property of
the standard \emph{loss}, not of the model itself. The joint GE
is therefore not a meaningful generalisation target for
ordinary unsupervised training---hidden variables have no
ground-truth values to compare against---but it remains a
well-defined and useful quantity for measuring the divergence
between two trained models whose hidden labellings can be put
in one-to-one correspondence (e.g., a teacher--student or
distillation setting in which the hidden units of the student
inherit a fixed semantic ordering from the teacher).
\end{remark}

\begin{remark}[The sign of the data bias as an e-flatness diagnostic]
The non-negativity of the data bias in the e-flat case is a
derived consequence of the generalized Pythagorean theorem
rather than a feature of its definition, and is therefore lost
whenever the e-flatness hypothesis fails.
Proposition~\ref{prop:negative} thus suggests using the
empirically measured sign of the algebraic data bias
(\ref{eq:databias-alg}) as a diagnostic for whether a
generative model class is well approximated by an exponential
family in its visible variables.
\end{remark}

\begin{remark}[The obstruction is not specific to hidden variables]
\label{rem:not-only-hidden}
Although hidden variables provide the most familiar source of
non-e-flatness, the proposition does not require them. Any
non-affine constraint on the natural parameters of an ambient
exponential family yields a curved (non-e-flat) submanifold and
the same loss of non-negativity for the data bias. A fully
visible example is the rank-constrained $\epsilon$-PCA model
class (\ref{eq:epca-class}) of Section~\ref{sec:epca}: each
$\epsilon$-PCA model is a Gaussian and hence lies in the
(e-flat) Gaussian exponential family on $\R^{\NV}$, but the
rank-plus-noise-floor constraint that characterises the
$\epsilon$-PCA submanifold is non-linear in the natural
parameter $\Sigma^{-1}$. The closed-form analysis of
Section~\ref{sec:epca} therefore proceeds via a technical
e-flat reformulation (Lemma~\ref{lem:rot-equiv}) that allows
Theorem~\ref{thm:3decomp} to be applied to $\epsilon$-PCA
indirectly.
\end{remark}

\section{An analytically tractable model: $\epsilon$-PCA}
\label{sec:epca}

We now turn to a fully visible model in which both the framework
of Theorem~\ref{thm:3decomp} and the underlying optimisation can
be carried through in closed form. The model is a regularised
principal component analysis of zero-mean Gaussian data with a
fixed noise floor $\epsilon$ on the discarded directions; we call
it $\epsilon$-PCA. It is closely related to probabilistic PCA
\cite{TippingBishop1999}, with the role of the residual noise
variance played by the parameter $\epsilon$, but the noise floor
here is held fixed rather than fitted, so as to keep the rank a
one-parameter family controlled by $\NK$.

\subsection{Setup}

The true distribution is the centered isotropic multivariate
normal
\begin{equation}
  \PV(v) \;=\; \frac{1}{\sqrt{(2\pi)^{\NV}}}
              \exp\!\Big[-\tfrac{1}{2}\,v^{\top} v\Big],
  \qquad \Sigma_{0} = I_{\NV}.
  \label{eq:trueGaussian}
\end{equation}
A training dataset of $D$ samples gives an empirical covariance
$\Sigma^{m} := \frac{1}{D}\, X_{m}^{\top}X_{m}$, distributed as a
Wishart matrix $\mathcal{W}_{\NV}(I, D)$.

\paragraph{The $\epsilon$-PCA model class.}
The rank-$\NK$ $\epsilon$-PCA model class is the family of
zero-mean Gaussians whose covariance has at most $\NK$ free
eigendirections and otherwise sits at the noise floor $\epsilon$:
\begin{align}
  \Mfam_{\epsilon\text{-PCA}}^{\NK}
  \;:=\;
  \big\{\,& \mathcal{N}\!\big(0,\, U D U^{\top}\big) :
  \nonumber \\
  & U \in \mathrm{St}(\NK, \NV),\;
  d_{i} > 0,
  \nonumber \\
  & D = \mathrm{diag}(d_{1},\ldots,d_{\NK},
                       \epsilon,\ldots,\epsilon)\big\},
  \label{eq:epca-class}
\end{align}
where $\mathrm{St}(\NK, \NV)$ is the Stiefel manifold of $\NK$
orthonormal vectors in $\R^{\NV}$ specifying the kept directions.
The constraint defining $\Mfam_{\epsilon\text{-PCA}}^{\NK}$ (``at
most $\NK$ eigenvalues exceed $\epsilon$'') is non-linear in the
natural parameter $\theta = -\tfrac{1}{2}\,\Sigma^{-1}$ of the
ambient Gaussian exponential family, so the model manifold is
\emph{not} e-flat (cf.\ Proposition~\ref{prop:negative} and
Remark~\ref{rem:not-only-hidden}). Consequently
Theorem~\ref{thm:3decomp} cannot be applied directly to
$\epsilon$-PCA; in Section~\ref{sec:variant4} we introduce a
technical e-flat reformulation that has the same total
generalization error on isotropic data, and to which the
decomposition does apply.

\paragraph{The trained model.}
For each training dataset $X_m$, we form $\Sigma^{m}$ and extract
its eigendecomposition
$\Sigma^{m} = U^{m}\Lambda^{m}(U^{m})^{\top}$ with
$\lambda_{1}^{(m)} \geq \cdots \geq \lambda_{\NV}^{(m)}$. The
trained $\epsilon$-PCA model is obtained by retaining the top
$\NK$ empirical eigendirections and replacing the remaining
$\NV - \NK$ eigenvalues by $\epsilon$:
\begin{equation}
  \QVm
  \;=\; \mathcal{N}\!\big(0,\, U^{m} D^{m} (U^{m})^{\top}\big),
  \label{eq:QVm-block}
\end{equation}
with
$D^{m} = \mathrm{diag}(\lambda_{1}^{(m)},\ldots,
\lambda_{\NK}^{(m)},\epsilon,\ldots,\epsilon)$. This is the
textbook PCA reconstruction with a noise floor on the discarded
directions.

\paragraph{$\NK$ as a rank, not a hidden dimension.}
Unlike RBMs, $\epsilon$-PCA has no hidden variables: the model
$\QVm$ is a Gaussian on the same $\R^{\NV}$ as the data. The
integer $\NK \leq \NV$ does not count hidden units; it is the
rank parameter of the model, controlling how many free
eigendirections of the empirical covariance are retained.

\medskip
Because $\PV = \mathcal{N}(0, I)$ is rotationally invariant, the
KL divergence $\KL(\PV \,\|\, \QVm)$ depends on $\QVm$ only
through its eigenvalues:
\begin{align}
  2\,\KL(\PV \,\|\, \QVm)
  \;=\;\;
  & \sum_{i=1}^{\NK}\!\Big(\frac{1}{\lambda_{i}^{(m)}}
                            + \log\lambda_{i}^{(m)}\Big)
  \nonumber \\
  & + (\NV - \NK)\!\Big(\frac{1}{\epsilon} + \log\epsilon\Big) - \NV.
  \label{eq:GEepca-realisation}
\end{align}
The eigenvectors $U^{m}$ drop out, and the analysis below
depends only on the empirical eigenvalue spectrum.

\subsection{Marchenko--Pastur asymptotics}

In the high-dimensional limit
$\NV, D \to \infty$ with $\alpha := \NV/D$ fixed, the empirical
spectral density of $\Sigma^{m}$ converges to the
Marchenko--Pastur (MP) law \cite{MarchenkoPastur1967}
\begin{align}
  p_{\mathrm{MP}}(\lambda)
  &\;=\; \frac{1}{2\pi\alpha\lambda}
        \sqrt{(\lambda_{+}-\lambda)(\lambda-\lambda_{-})},
  \nonumber \\
  \lambda_{\pm} &\;=\; (1 \pm \sqrt{\alpha})^{2},
  \label{eq:MPdensity}
\end{align}
supported on $[\lambda_{-},\lambda_{+}]$ (we assume $\alpha < 1$, so
that there is no atom at zero).
We define the cutoff
$\lambda_{\mathrm{cut}} \in [\lambda_{-},\lambda_{+}]$ by
\begin{equation}
  \int_{\lambda_{\mathrm{cut}}}^{\lambda_{+}}
       p_{\mathrm{MP}}(\lambda)\,d\lambda
  \;=\; \frac{\NK}{\NV} \;=:\; r ,
  \label{eq:cutoffdef}
\end{equation}
so that, in the asymptotic limit, retaining the top $\NK$
eigenvalues amounts to retaining all eigenvalues above
$\lambda_{\mathrm{cut}}$.

\subsection{Technical reformulation as an e-flat sub-family}
\label{sec:variant4}

To bring the framework of Section~\ref{sec:decomp} to bear on
$\epsilon$-PCA, we use a technical reformulation of the trained
model that has the same total generalization error on isotropic
data but lives on an e-flat sub-family. Let
\begin{equation}
  \Mfam_{\diamond}^{\NK}
  \;:=\;
  \big\{\,\mathcal{N}(0, \mathrm{diag}(d_{1},\ldots,d_{\NK},
                                        \epsilon, \ldots, \epsilon))
   \,:\, d_{i} > 0\,\big\}
  \label{eq:Mfam-diamond}
\end{equation}
be the coordinate sub-family of the diagonal Gaussian
exponential family on $\R^{\NV}$ obtained by holding the last
$\NV - \NK$ diagonal entries fixed at $\epsilon$. Since
the constraint $d_{\NK+1} = \cdots = d_{\NV} = \epsilon$ is
affine in the natural parameters $\theta_{i} = -1/(2 d_{i})$ of
the diagonal Gaussian family, $\Mfam_{\diamond}^{\NK}$ is e-flat.

Given the empirical eigenvalues
$\lambda_{1}^{(m)} \geq \cdots \geq \lambda_{\NV}^{(m)}$ from the
dataset $X_{m}$, define the trained ``$\diamond$-model''
\begin{equation}
  \widetilde{Q}^{m}
  \;:=\;
  \mathcal{N}\!\big(0,\, \mathrm{diag}(\lambda_{1}^{(m)},\ldots,
                                        \lambda_{\NK}^{(m)},
                                        \epsilon,\ldots,\epsilon)\big)
  \;\in\;\Mfam_{\diamond}^{\NK} ,
  \label{eq:Qtildem}
\end{equation}
i.e., place the top $\NK$ empirical eigenvalues on the diagonal
of $\Mfam_{\diamond}^{\NK}$ in the standard basis. The
$\diamond$-model $\widetilde{Q}^{m}$ is in general not equal
to the eigen-$\epsilon$-PCA model $\QVm$ of (\ref{eq:QVm-block})
as a Gaussian distribution---the latter rotates back to the
empirical eigenbasis via $U^{m}$, the former does not---but on
isotropic data the two have identical KL divergence to $\PV$.

\begin{lemma}[Rotational equivalence on isotropic data]
\label{lem:rot-equiv}
For $\PV = \mathcal{N}(0, I_{\NV})$ and any $\NK \leq \NV$, the
eigen-$\epsilon$-PCA model $\QVm$ of (\ref{eq:QVm-block}) and the
$\diamond$-model $\widetilde{Q}^{m}$ of (\ref{eq:Qtildem})
satisfy
\begin{equation}
  \KL\!\big(\PV \,\big\|\, \QVm\big)
  \;=\;
  \KL\!\big(\PV \,\big\|\, \widetilde{Q}^{m}\big)
  \label{eq:rot-equiv}
\end{equation}
for every realisation $m$. In particular, the two models share
the same total generalization error
$\langle\KL(\PV \,\|\, \cdot)\rangle_{m}$.
\end{lemma}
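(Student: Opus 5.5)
The plan is to compute both sides of (\ref{eq:rot-equiv}) from the closed-form KL divergence between two centred Gaussians,
\begin{equation*}
  2\,\KL\big(\mathcal{N}(0,\Sigma_{0})\,\|\,\mathcal{N}(0,\Sigma)\big)
  \;=\; \tr(\Sigma^{-1}\Sigma_{0}) + \log\det\Sigma - \log\det\Sigma_{0} - \NV ,
\end{equation*}
and then show that, for $\Sigma_{0}=I_{\NV}$, the right-hand side depends on $\Sigma$ only through its spectrum. With $\Sigma_{0}=I$ the expression reduces to $\tr(\Sigma^{-1}) + \log\det\Sigma - \NV$. Both the trace and the determinant are invariant under orthogonal conjugation $\Sigma\mapsto O\Sigma O^{\top}$. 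Equivalently, $\PV$ is invariant under the change of variables $v\mapsto O v$, and the KL divergence is invariant under a common bijective reparametrisation of both arguments.

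The key steps run as follows. First, I will write the covariance of $\QVm$ as $\Sigma_{\QVm}=U^{m}D^{m}(U^{m})^{\top}$, where $U^{m}$ is the full $\NV\times\NV$ orthogonal eigenvector matrix of $\Sigma^{m}$, and note that the covariance of $\widetilde{Q}^{m}$ is exactly $D^{m}$. Second, I will observe that $\Sigma_{\QVm}^{-1}=U^{m}(D^{m})^{-1}(U^{m})^{\top}$, which gives $\tr(\Sigma_{\QVm}^{-1})=\tr((D^{m})^{-1})$ by cyclicity and $\det\Sigma_{\QVm}=\det D^{m}$ because $\det U^{m}=\pm1$. Third, substituting these into the formula shows that both KL divergences equal the same expression, namely (\ref{eq:GEepca-realisation}). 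This holds pointwise in $m$.

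The averaged statement then follows immediately: since the identity holds realisation by realisation, I will simply average over the datasets $m$.

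The only point needing care is that $D^{m}$ is strictly positive definite, so that the KL divergences are finite. For the top $\NK$ entries this requires $\lambda^{(m)}_{\NK}>0$, which holds almost surely when $D\ge\NV$ (the case $\alpha<1$). The remaining entries equal $\epsilon>0$. If some retained eigenvalue were zero, both sides would be $+\infty$, so the equality still holds in the extended sense.

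I do not expect a genuine obstacle; the argument is a routine invariance check.
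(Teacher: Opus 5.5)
Your proposal is correct and matches the paper's proof: both reduce to the fact that, with $\Sigma_{0}=I$, the Gaussian KL depends on the model covariance only through $\tr(\Sigma^{-1})$ and $\log\det\Sigma$, which are invariant under conjugation by $U^{m}$. Your remark that the retained eigenvalues are almost surely positive for $\alpha<1$ is a harmless addition that the paper leaves implicit.
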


\begin{proof}
Both models have covariance with eigenvalues
$\{\lambda_{1}^{(m)},\ldots,\lambda_{\NK}^{(m)},\epsilon,
\ldots,\epsilon\}$, differing only in the orthonormal frame in
which those eigenvalues are placed: $U^{m}$ for $\QVm$, and the
identity for $\widetilde{Q}^{m}$. The Gaussian KL
(\ref{eq:gausKL}) below depends on the model covariance only
through $\tr(\Sigma_{1}^{-1}\Sigma_{0})$ and $|\Sigma_{1}|$, both
of which are conjugation-invariant when $\Sigma_{0} = I$. Hence
$\KL(\PV\|\QVm) = \KL(\PV\|\widetilde{Q}^{m})$ realisation by
realisation.
\qed
\end{proof}

The lemma is the bridge that lets us apply the framework of
Section~\ref{sec:decomp}, whose hypothesis (e-flatness) fails for
the eigen-$\epsilon$-PCA model class
(\ref{eq:epca-class})---and for which therefore
Theorem~\ref{thm:3decomp} cannot be invoked
directly---to the $\diamond$-model class
$\Mfam_{\diamond}^{\NK}$, which is e-flat by construction. Any
result we derive about $\langle\KL(\PV\|\widetilde{Q}^{m})\rangle_{m}$
transfers to $\langle\KL(\PV\|\QVm)\rangle_{m}$ via
Lemma~\ref{lem:rot-equiv}. The $\diamond$-model itself does not
appear after Section~\ref{sec:epca-decomp}; it is solely a proof
device.

We pause to motivate this somewhat indirect route.
Theorem~\ref{thm:optimal-cutoff} below could also be obtained by
minimising the total GE expression directly, without ever
introducing the $\diamond$-model or invoking
Theorem~\ref{thm:3decomp}: substitute (\ref{eq:partialsum-MP})
into (\ref{eq:gausKL}), differentiate the resulting scalar
function of $r$, and observe that the Marchenko--Pastur prefactor
cancels in the chain rule. We do not proceed that way for two
reasons. First, the decomposition \emph{exposes the meaning} of
the cutoff condition: it is the balance between the marginal
benefit of removing one $\epsilon$-pinned direction from the
model error and the marginal cost of admitting one
finite-sample fluctuating direction into the data bias. Both
rates have the same functional form $f(\cdot) - 1$, and that
fact---rather than any arithmetic coincidence---is what makes
the cutoff equation solvable in closed form. Second, the
decomposition links Theorem~\ref{thm:optimal-cutoff} to the
general framework of Section~\ref{sec:decomp}: the closed-form
result becomes a direct application of
Theorem~\ref{thm:3decomp} to a particular e-flat sub-family,
rather than an isolated calculation in random matrix theory.

\subsection{Decomposition of GE and the optimal cutoff}
\label{sec:epca-decomp}

For zero-mean Gaussians with covariances $\Sigma_{0}$ and $\Sigma_{1}$,
\begin{equation}
  2\,\KL\!\big(\mathcal{N}(0,\Sigma_{0}) \,\big\|\, \mathcal{N}(0,\Sigma_{1})\big)
  \;=\;
  \tr(\Sigma_{1}^{-1}\Sigma_{0}) - \NV + \log\frac{|\Sigma_{1}|}{|\Sigma_{0}|}.
  \label{eq:gausKL}
\end{equation}
With $\Sigma_{0} = I_{\NV}$ and $\Sigma_{1}$ the diagonal
covariance of $\widetilde{Q}^{m}$, the trace and
log-determinant decompose into contributions from the kept and
from the discarded directions:
\begin{align*}
  \log|\widetilde{Q}^{m}|
  &= \sum_{i=1}^{\NK} \log\lambda_i^{(m)}
   \;+\; (\NV-\NK)\log\epsilon ,
  \\
  \tr\!\big[(\widetilde{Q}^{m})^{-1}\big]
  &= \sum_{i=1}^{\NK} \frac{1}{\lambda_i^{(m)}}
   \;+\; \frac{\NV-\NK}{\epsilon} .
\end{align*}
In the high-dimensional limit, partial sums over the top $\NK$
eigenvalues can be written as integrals against the MP density
restricted to the kept tail, with the asymptotic prefactor $\NV$:
\begin{equation}
  \sum_{i=1}^{\NK} g\!\big(\lambda_i^{(m)}\big)
  \;\xrightarrow{\NV,D\to\infty}\;
  \NV \int_{\lambda_{\mathrm{cut}}}^{\lambda_+}
       g(\lambda)\,p_{\mathrm{MP}}(\lambda)\,d\lambda
  \label{eq:partialsum-MP}
\end{equation}
for any continuous $g$ (consistency with (\ref{eq:cutoffdef}) is
the case $g \equiv 1$). Define
\begin{align}
  I_{-1}(\lambda_{\mathrm{cut}}) &:=
       \int_{\lambda_{\mathrm{cut}}}^{\lambda_{+}}
       \frac{p_{\mathrm{MP}}(\lambda)}{\lambda}\,d\lambda,
  \nonumber \\
  I_{\log}(\lambda_{\mathrm{cut}}) &:=
       \int_{\lambda_{\mathrm{cut}}}^{\lambda_{+}}
       \log\lambda\;p_{\mathrm{MP}}(\lambda)\,d\lambda.
  \label{eq:integralsdef}
\end{align}

\paragraph{Three-component decomposition via Theorem~\ref{thm:3decomp}.}
Because $\Mfam_{\diamond}^{\NK}$ is e-flat,
Theorem~\ref{thm:3decomp} applies to the $\diamond$-models
$\{\widetilde{Q}^{m}\}_{m}$. The m-projection of $\PV$ onto
$\Mfam_{\diamond}^{\NK}$ is found by minimising
$\KL(\PV \,\|\, Q)$ over $Q \in \Mfam_{\diamond}^{\NK}$; with
$\PV = \mathcal{N}(0, I)$ this gives $d_{i} = 1$ for $i \leq \NK$,
\begin{equation}
  \widetilde{Q}^{0}
  \;=\;
  \mathcal{N}\!\big(0,\, \mathrm{diag}(1,\ldots,1,
                                        \epsilon,\ldots,\epsilon)\big).
  \label{eq:Q0diamond}
\end{equation}
The model error $\KL(\PV \,\|\, \widetilde{Q}^{0})$ is
straightforward from (\ref{eq:gausKL}):
\begin{equation}
  \frac{2\,\mathrm{ME}(r)}{\NV}
  \;=\;
  (1-r)\!\left[\,\frac{1}{\epsilon} + \log\epsilon - 1\,\right]
  \;=\;
  (1-r)\,(f(\epsilon) - 1),
  \label{eq:ME-r}
\end{equation}
where $f(x) := 1/x + \log x$ is the per-direction Gaussian KL
``cost function''. ME is linear in $r$ and decreases at the
constant rate $f(\epsilon) - 1 > 0$ per unit increase in the
rank ratio.

The e-mixture $\widetilde{Q}^{\bar{m}}$ of the $\diamond$-models
is, by (\ref{eq:emixture}), the diagonal Gaussian whose first
$\NK$ entries are the harmonic averages of the corresponding
top empirical eigenvalues; in the deterministic MP limit these
averages converge to the deterministic quantile values
$\mu_{i} = \lambda_{(i/\NV)}$ where $\lambda_{(\cdot)}$ is the
inverse of the upper-tail MP CDF. A direct computation using
(\ref{eq:partialsum-MP}) gives
\begin{equation}
  \frac{2\,\mathrm{Data\ bias}(r)}{\NV}
  \;=\;
  I_{-1}(\lambda_{\mathrm{cut}}) + I_{\log}(\lambda_{\mathrm{cut}}) - r .
  \label{eq:DB-r}
\end{equation}
The variance $\langle\KL(\widetilde{Q}^{\bar{m}}\,\|\,
\widetilde{Q}^{m})\rangle_{m}$ vanishes in the deterministic
MP limit (the e-mixture coincides with the typical realisation),
so that asymptotically
\begin{equation}
  \mathrm{GE}(r) \;=\; \mathrm{ME}(r) + \mathrm{Data\ bias}(r) ,
  \label{eq:GE-decomp-asymp}
\end{equation}
with both terms manifestly non-negative by
Theorem~\ref{thm:3decomp}.

\paragraph{First-order condition.}
Differentiating (\ref{eq:ME-r}) and (\ref{eq:DB-r}) and using
$d\lambda_{\mathrm{cut}}/dr = -1/p_{\mathrm{MP}}(\lambda_{\mathrm{cut}})$
together with
$dI_{-1}/d\lambda_{\mathrm{cut}} = -p_{\mathrm{MP}}/\lambda_{\mathrm{cut}}$,
$dI_{\log}/d\lambda_{\mathrm{cut}} = -p_{\mathrm{MP}}\log\lambda_{\mathrm{cut}}$,
the prefactor $p_{\mathrm{MP}}$ cancels and one finds
\begin{align}
  \frac{d}{dr}\!\left[\frac{2\,\mathrm{ME}}{\NV}\right]
  &\;=\;
  -\,(f(\epsilon) - 1),
  \nonumber \\
  \frac{d}{dr}\!\left[\frac{2\,\mathrm{Data\ bias}}{\NV}\right]
  &\;=\;
  f(\lambda_{\mathrm{cut}}) - 1 .
  \label{eq:dME-dDB}
\end{align}
Adding the two and using
(\ref{eq:GE-decomp-asymp}) gives
\begin{equation}
  \frac{d}{dr}\!\left[\frac{2\,\mathrm{GE}}{\NV}\right]
  \;=\;
  f\!\big(\lambda_{\mathrm{cut}}\big) \;-\; f(\epsilon).
  \label{eq:dGEdr}
\end{equation}
The optimal rank therefore sits at the $r$ for which the
marginal cost of one extra free direction in the data bias
($\sim f(\lambda_{\mathrm{cut}}) - 1$) exactly cancels the
marginal benefit of removing one $\epsilon$-pinned direction from ME
($\sim f(\epsilon) - 1$). The two rates have the same form
$f(\cdot) - 1$, which is what makes the cutoff condition
$f(\lambda_{\mathrm{cut}}^{*}) = f(\epsilon)$ closed-form.

The first-order condition $f(\lambda_{\mathrm{cut}}) = f(\epsilon)$
admits, in general, two roots, because $f$ is strictly convex with
global minimum $f(1) = 1$, strictly decreasing on $(0,1)$ and
strictly increasing on $(1,\infty)$. For $\epsilon \in (0,1)$ the
two roots are $\lambda_{\mathrm{cut}} = \epsilon$ and
$\lambda_{\mathrm{cut}} = \widetilde{\lambda} > 1$.

To select the relevant root, follow the trajectory of
(\ref{eq:dGEdr}) as $r$ ranges over $[0,1]$. At $r = 0$,
$\lambda_{\mathrm{cut}} = \lambda_{+}$; as $r$ grows
$\lambda_{\mathrm{cut}}$ first reaches $\widetilde{\lambda}$ (if
this value lies in the MP support), then $1$ (at which
(\ref{eq:dGEdr}) equals $1 - f(\epsilon) < 0$), then $\epsilon$,
and finally $\lambda_{-}$. The derivative therefore changes sign
$+\!\to\!-$ at $\lambda_{\mathrm{cut}} = \widetilde{\lambda}$
(a local \emph{maximum} of GE) and $-\!\to\!+$ at
$\lambda_{\mathrm{cut}} = \epsilon$ (a local \emph{minimum}).

\begin{theorem}[Optimal cutoff in $\epsilon$-PCA]
\label{thm:optimal-cutoff}
In the high-dimensional limit, for
$\epsilon \in (\lambda_{-}(\alpha), 1)$, the unique interior
local minimum of the generalization error
$\langle\KL(\PV \,\|\, \QVm)\rangle_{m}$ of
eigen-$\epsilon$-PCA on isotropic Gaussian data with intrinsic
noise floor $\epsilon$ is attained at the cutoff
\begin{equation}
  \boxed{\;\lambda_{\mathrm{cut}}^{*} \;=\; \epsilon\;}
  \label{eq:optcut}
\end{equation}
and the corresponding optimal rank is
\begin{equation}
  \NK^{*} \;=\; \NV \int_{\epsilon}^{\lambda_{+}(\alpha)}
                  p_{\mathrm{MP}}(\lambda;\alpha)\,d\lambda .
  \label{eq:NKopt}
\end{equation}
Equivalently, the optimal model retains exactly those empirical
covariance eigenvalues that exceed the intrinsic noise floor
$\epsilon$. The other root $\widetilde{\lambda} > 1$ of the
first-order condition $f(\lambda) = f(\epsilon)$ is a local
\emph{maximum} of GE on the kept branch.
\end{theorem}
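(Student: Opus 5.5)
The plan is to reduce the statement to a one-variable calculus problem in $r=\NK/\NV$, or equivalently in $\lambda_{\mathrm{cut}}\in[\lambda_-,\lambda_+]$, and then read off the sign pattern of the derivative (\ref{eq:dGEdr}).

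\textbf{Step 1: transfer to the $\diamond$-model.} By Lemma~\ref{lem:rot-equiv}, the generalization error of eigen-$\epsilon$-PCA equals that of the $\diamond$-models realisation by realisation. So it suffices to analyse $\mathrm{GE}(r)=\langle\KL(\PV\|\widetilde{Q}^{m})\rangle_m$. On the e-flat family $\Mfam_\diamond^{\NK}$, Theorem~\ref{thm:3decomp} together with the vanishing of the variance in the MP limit gives (\ref{eq:GE-decomp-asymp}). Substituting (\ref{eq:ME-r}) and (\ref{eq:DB-r}), I would write
\begin{equation*}
\frac{2\,\mathrm{GE}}{\NV}=(1-r)\big(f(\epsilon)-1\big)+I_{-1}(\lambda_{\mathrm{cut}})+I_{\log}(\lambda_{\mathrm{cut}})-r .
\end{equation*}
As a consistency check, this should agree with (\ref{eq:GEepca-realisation}) once (\ref{eq:partialsum-MP}) is applied. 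That agreement means the result does not really depend on the decomposition, only on the limit (\ref{eq:partialsum-MP}).

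\textbf{Step 2: derivative and its sign.} Since $r\mapsto\lambda_{\mathrm{cut}}(r)$ is a strictly decreasing $C^1$ bijection $[0,1]\to[\lambda_-,\lambda_+]$, with $p_{\mathrm{MP}}>0$ in the interior of the support, the chain rule yields (\ref{eq:dGEdr}): $\frac{d}{dr}\frac{2\mathrm{GE}}{\NV}=f(\lambda_{\mathrm{cut}})-f(\epsilon)$. Next I use the shape of $f$: it is strictly decreasing on $(0,1)$, strictly increasing on $(1,\infty)$, and has minimum value $1$. Fix $\epsilon\in(\lambda_-,1)$. The set where $f(\lambda)<f(\epsilon)$ is exactly the interval $(\epsilon,\widetilde\lambda)$, where $\widetilde\lambda>1$ is the second root. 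Intersecting with the support, the derivative is negative for $\lambda_{\mathrm{cut}}\in(\epsilon,\min(\widetilde\lambda,\lambda_+))$ and positive for $\lambda_{\mathrm{cut}}\in(\lambda_-,\epsilon)$. It is also positive on $(\widetilde\lambda,\lambda_+)$ whenever $\widetilde\lambda<\lambda_+$. Because $r$ increases as $\lambda_{\mathrm{cut}}$ decreases, the derivative in $r$ changes sign from $-$ to $+$ exactly at $\lambda_{\mathrm{cut}}=\epsilon$. This point lies in the interior of the support because $\lambda_-<\epsilon<1<\lambda_+$, so it gives a strict interior local minimum. At $\widetilde\lambda$ (when $\widetilde\lambda<\lambda_+$) the sign changes from $+$ to $-$, giving a local maximum.

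\textbf{Step 3: uniqueness and conclusion.} The only zeros of the derivative on the open support are $\epsilon$ and $\widetilde\lambda$, and $\widetilde\lambda$ is a maximum. Hence $\lambda_{\mathrm{cut}}=\epsilon$ is the unique interior local minimum. Inserting it into (\ref{eq:cutoffdef}) gives (\ref{eq:NKopt}).

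\textbf{Main obstacle.} The step needing the most care is justifying the asymptotic formula itself. One must show that the normalized partial sums over the top $\NK$ eigenvalues converge to the MP tail integrals uniformly in $r$. This needs MP convergence together with almost-sure convergence of the extreme eigenvalues to $\lambda_\pm$, so that $1/\lambda$ and $\log\lambda$ stay bounded on the relevant range when $\alpha<1$. Uniform convergence is what makes the limit of the minimiser equal the minimiser of the limit. I would take (\ref{eq:partialsum-MP}) as given, as the paper does, and treat the claims as statements about the limiting function $\mathrm{GE}(r)/\NV$.
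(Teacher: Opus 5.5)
Your proposal is correct and follows essentially the same route as the paper. You transfer to the $\diamond$-model via Lemma~\ref{lem:rot-equiv}, write $\mathrm{GE}=\mathrm{ME}+\mathrm{Data\ bias}$ in the MP limit, obtain the derivative $f(\lambda_{\mathrm{cut}})-f(\epsilon)$ by the chain rule, and read off the sign pattern as $\lambda_{\mathrm{cut}}$ decreases. The only difference is that you certify strictness of the minimum by the sign change of the first derivative, whereas the paper additionally checks the second derivative $(1-\epsilon)/(\epsilon^{2}p_{\mathrm{MP}}(\epsilon))>0$; either suffices.
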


\begin{proof}
The decomposition (\ref{eq:GE-decomp-asymp}) of the
$\diamond$-model GE into ME~+~Data bias is a direct application
of Theorem~\ref{thm:3decomp} to the e-flat sub-family
$\Mfam_{\diamond}^{\NK}$. The first-order condition
(\ref{eq:dGEdr}) and the sign analysis preceding the theorem
identify $\lambda_{\mathrm{cut}}^{*} = \epsilon$ as the unique
interior local minimum of GE on $r \in (0,1)$. To verify
non-degeneracy, differentiate (\ref{eq:dGEdr}) once more in $r$:
\begin{align*}
  \frac{d^{2}}{dr^{2}}\!\left[\frac{2\,\mathrm{GE}}{\NV}\right]
  &\;=\; f'(\lambda_{\mathrm{cut}})\,
         \frac{d\lambda_{\mathrm{cut}}}{dr}
  \;=\; -\,\frac{f'(\lambda_{\mathrm{cut}})}{p_{\mathrm{MP}}(\lambda_{\mathrm{cut}})}.
\end{align*}
Evaluated at the critical point $\lambda_{\mathrm{cut}} = \epsilon$
and using $f'(\epsilon) = (\epsilon - 1)/\epsilon^{2}$,
\begin{equation*}
  \frac{d^{2}}{dr^{2}}\!\left[\frac{2\,\mathrm{GE}}{\NV}\right]_{\lambda_{\mathrm{cut}}=\epsilon}
  \;=\;
  -\,\frac{(\epsilon - 1)/\epsilon^{2}}{p_{\mathrm{MP}}(\epsilon)}
  \;=\;
  \frac{(1 - \epsilon)}{\epsilon^{2}\,p_{\mathrm{MP}}(\epsilon)}
  \;>\; 0,
\end{equation*}
since $\epsilon < 1$ and $p_{\mathrm{MP}}(\epsilon) > 0$ inside
the MP support, confirming that $\lambda_{\mathrm{cut}}^{*} =
\epsilon$ is a strict local minimum. Substituting
$\lambda_{\mathrm{cut}}^{*} = \epsilon$ into (\ref{eq:cutoffdef})
yields (\ref{eq:NKopt}). The conclusion is stated for the
$\diamond$-model
$\langle\KL(\PV \,\|\, \widetilde{Q}^{m})\rangle_{m}$;
Lemma~\ref{lem:rot-equiv} transfers it to the eigen-$\epsilon$-PCA
GE $\langle\KL(\PV \,\|\, \QVm)\rangle_{m}$.
\qed
\end{proof}

\begin{remark}[Why the cutoff condition is not as trivial as it looks]
\label{rem:not-trivial}
The conclusion ``retain those eigenvalues exceeding the noise
floor $\epsilon$'' coincides with the standard denoising
heuristic, but the closed-form derivation hides three points
that are not at all evident a priori.

\emph{(a) Cancellation of the Marchenko--Pastur prefactor.}
The first-order condition $f(\lambda_{\mathrm{cut}}) = f(\epsilon)$
emerges only because the chain-rule expansion of
$d\mathrm{GE}/dr$ produces the same prefactor
$p_{\mathrm{MP}}(\lambda_{\mathrm{cut}})$ from
$d\lambda_{\mathrm{cut}}/dr$ as from $dI_{-1}/d\lambda_{\mathrm{cut}}$
and $dI_{\log}/d\lambda_{\mathrm{cut}}$, and the prefactor
\emph{cancels}. Were the loss anything other than KL---for
instance, mean-squared error---this cancellation would not occur
and the optimal cutoff would depend explicitly on the local
density $p_{\mathrm{MP}}(\lambda_{\mathrm{cut}})$ and hence on
the aspect ratio $\alpha$. The marginal rates
(\ref{eq:dME-dDB}) make the cancellation transparent: the
$r$-derivatives of ME and Data~bias are both of the form
$f(\cdot) - 1$, and their balance is therefore a transcendental
equation in $f$ alone, independent of the spectral measure.

\emph{(b) Independence from $\alpha$.}
A direct consequence of (a) is that
$\lambda_{\mathrm{cut}}^{*}$ does not depend on the
dimension-to-sample-size ratio $\alpha = \NV/D$: it is the same
function of $\epsilon$ at $\alpha = 0.1$ as at $\alpha = 0.9$.
This contrasts sharply with the optimal hard-threshold rule of
\cite{GavishDonoho2014}, in which the threshold scales as
$(4/\sqrt{3})\sqrt{D \cdot \mathrm{noise}}$ with explicit
$\alpha$-dependence through the upper Marchenko--Pastur edge,
and reflects the fact that KL is a unitless information measure
while Frobenius loss carries dimensional weight.

\emph{(c) The other root is a local maximum, not a minimum.}
The equation $f(\lambda) = f(\epsilon)$ has two roots in
$(0,\infty)$, namely $\epsilon$ itself and a strictly larger
value $\widetilde{\lambda} > 1$. The second root corresponds to
a local \emph{maximum} of the GE on the kept branch and would
be invisible to a per-direction unconstrained analysis: the
unconstrained per-direction rule
$f(\lambda_i) < f(\epsilon)$ would in fact require the largest
empirical eigenvalues (those with $\lambda_i > \widetilde{\lambda}$,
which can lie inside the MP bulk for moderate $\alpha$) to be
\emph{discarded}, since fitting them with their large empirical
value makes the model overconfident and increases the KL more
than the noise-floor approximation does. The rank constraint of
$\epsilon$-PCA forces these large eigenvalues to be retained as
soon as $\NK$ exceeds their count, producing an
\emph{overshoot} cost. Theorem~\ref{thm:optimal-cutoff}
nevertheless asserts that this overshoot does not move the
optimal cutoff away from $\epsilon$ on the lower side---a fact
that is geometric, not arithmetic, and follows from the
linearity of $\mathrm{ME}(r)$ in $r$ once $\Mfam_{\diamond}^{\NK}$
is recognised as e-flat.
\end{remark}

\begin{remark}
Theorem~\ref{thm:optimal-cutoff} produces the strikingly simple
prescription that an $\epsilon$-PCA model should retain precisely
those data directions whose empirical variance exceeds the model's
intrinsic noise floor $\epsilon$. The condition is the natural
analogue, in the $\epsilon$-PCA setting, of the hard-threshold
singular-value rule of \cite{GavishDonoho2014} and of the
noise-only PCA criterion based on the Marchenko--Pastur edge
\cite{Veraart2016MPPCA}; here, however, the threshold is determined
by a model parameter ($\epsilon$) rather than by the upper edge of
the MP density.
\end{remark}

\subsection{Three-regime phase structure of the global optimum}
\label{sec:phase}

Theorem~\ref{thm:optimal-cutoff} identifies the unique interior
local minimum of GE on the open rank ratio $r \in (0,1)$. Whether
this interior minimum is in fact the \emph{global} optimum
depends on a comparison with the two boundary values
$\mathrm{GE}(r{=}0)$ and $\mathrm{GE}(r{=}1)$, and on whether the
interior cutoff $\lambda_{\mathrm{cut}}^{*} = \epsilon$ lies
inside the MP support $[\lambda_{-}, \lambda_{+}]$. Carrying out
this comparison yields the following sharp three-regime phase
structure.

\begin{proposition}[Three-regime phase structure]
\label{prop:phase}
Fix the aspect ratio $\alpha \in (0, 1)$ and consider
$\epsilon$-PCA on isotropic Gaussian data in the
high-dimensional limit $\NV, D \to \infty$ with
$\NV/D = \alpha$. Then the rank ratio
$r^{*} = \NK^{*}/\NV$ that globally minimizes the
generalization error (\ref{eq:GE-decomp-asymp}) satisfies:
\begin{enumerate}
\item[(R1)] (\textbf{retain-all}) \;
  If $\epsilon \leq \lambda_{-}(\alpha) = (1-\sqrt{\alpha})^{2}$,
  then $r^{*} = 1$. Equivalently, the cutoff
  $\lambda_{\mathrm{cut}}^{*}$ lies below the MP support, every
  empirical eigenvalue exceeds $\epsilon$, and the optimal model
  retains all of them.
\item[(R2)] (\textbf{interior}) \;
  If $\lambda_{-}(\alpha) < \epsilon < \epsilon_{*}(\alpha)$,
  then $r^{*}$ is the interior optimum of
  Theorem~\ref{thm:optimal-cutoff},
  $r^{*} = \int_{\epsilon}^{\lambda_{+}(\alpha)}
   p_{\mathrm{MP}}(\lambda;\alpha)\,d\lambda$, and the optimal
  model retains exactly those empirical eigenvalues that exceed
  $\epsilon$.
\item[(R3)] (\textbf{collapse}) \;
  If $\epsilon \geq \epsilon_{*}(\alpha)$, then $r^{*} = 0$. The
  optimal model is the pure noise-floor distribution
  $\mathcal{N}(0,\,\epsilon I_{\NV})$ and uses no information
  from the training data.
\end{enumerate}
The collapse threshold $\epsilon_{*}(\alpha) \in
(\lambda_{-}(\alpha), 1)$ is the unique solution of
\begin{equation}
  \mathrm{GE}_{\mathrm{interior}}(\epsilon)
  \;=\;
  \tfrac{1}{2}\,\NV\!\Big(\tfrac{1}{\epsilon} + \log\epsilon - 1\Big),
  \label{eq:eps-star}
\end{equation}
where the right-hand side is the closed-form value of
$\mathrm{GE}(r{=}0)$ and the left-hand side is the value of
(\ref{eq:GE-decomp-asymp}) at the interior cutoff
$\lambda_{\mathrm{cut}}^{*} = \epsilon$.
\end{proposition}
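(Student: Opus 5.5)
I will work entirely with the asymptotic profile
\[
g(r) := \frac{2\,\mathrm{GE}(r)}{\NV} = (1-r)\bigl(f(\epsilon)-1\bigr) + I_{-1}(\lambda_{\mathrm{cut}}(r)) + I_{\log}(\lambda_{\mathrm{cut}}(r)) - r ,
\]
which is (\ref{eq:GE-decomp-asymp}), and its derivative (\ref{eq:dGEdr}), $g'(r) = f(\lambda_{\mathrm{cut}}(r)) - f(\epsilon)$. Since $\lambda_{\mathrm{cut}}$ decreases continuously and strictly from $\lambda_+$ to $\lambda_-$ as $r$ runs over $[0,1]$, the global minimizer can be read off from the sign pattern of $g'$ along this trajectory together with the two endpoint values. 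The endpoint values are explicit: $g(0) = f(\epsilon)-1$, and $g(1) = \int (1/\lambda + \log\lambda)\,p_{\mathrm{MP}}\,d\lambda - 1$.

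\emph{Regime (R1), $\epsilon \le \lambda_-$.} Here $\lambda_{\mathrm{cut}} \ge \lambda_- \ge \epsilon$. Using the convexity of $f$, I will show that $g'$ has at most one sign change on $(0,1)$, and that any such change is $+\to-$: it occurs where $\lambda_{\mathrm{cut}} = \widetilde\lambda > 1$. Indeed, $f(\lambda_{\mathrm{cut}}) < f(\epsilon)$ holds exactly when $\lambda_{\mathrm{cut}} \in (\epsilon, \widetilde\lambda)$. Thus $g$ is either monotone decreasing, or increasing then decreasing, so its minimum lies at an endpoint. I then need $g(1) \le g(0)$. Writing $g(0)-g(1) = \int_0^1 \bigl(f(\epsilon) - f(\lambda_{\mathrm{cut}}(s))\bigr)\,ds$, this difference is at least $\int_0^1 \bigl(f(\lambda_-) - f(\lambda_{\mathrm{cut}})\bigr)\,ds$, because $f$ is decreasing on $(0,1)$ and $\epsilon \le \lambda_- < 1$. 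So it suffices to prove the single-$\alpha$ inequality
\[
\int f\,p_{\mathrm{MP}} \le f(\lambda_-).
\]
For this I will use the known MP moments $\int \lambda^{-1} p_{\mathrm{MP}} = 1/(1-\alpha)$ and
\[
\int \log\lambda\, p_{\mathrm{MP}} = -1 - \frac{1-\alpha}{\alpha}\log(1-\alpha),
\]
and then check the resulting one-variable inequality in $\alpha$, by series expansion near $0$ and monotonicity. This gives $r^* = 1$.

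\emph{Regimes (R2) and (R3), $\lambda_- < \epsilon < 1$.} The sign analysis before Theorem~\ref{thm:optimal-cutoff} shows that $g'$ has the pattern $(+)\to(-)\to(+)$, with the $-\to+$ change at $\lambda_{\mathrm{cut}} = \epsilon$; if $\widetilde\lambda \ge \lambda_+$, the initial $(+)$ segment is absent. Hence the global minimum is either the interior point $r_\epsilon = \int_\epsilon^{\lambda_+} p_{\mathrm{MP}}$ or the endpoint $r = 0$; the endpoint $r = 1$ is excluded because $g$ is increasing on $(r_\epsilon, 1)$. I will then define
\[
\Delta(\epsilon) := g(r_\epsilon) - g(0) = \int_0^{r_\epsilon} \bigl(f(\lambda_{\mathrm{cut}}(s)) - f(\epsilon)\bigr)\,ds,
\]
so that (R2) corresponds to $\Delta < 0$ and (R3) to $\Delta \ge 0$, and $\epsilon_*$ is the root of (\ref{eq:eps-star}). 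By the envelope theorem, using $g'(r_\epsilon) = 0$,
\[
\Delta'(\epsilon) = -f'(\epsilon)\,r_\epsilon = \frac{1-\epsilon}{\epsilon^2}\,r_\epsilon > 0,
\]
so $\Delta$ is strictly increasing and has at most one root.

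\emph{Locating the root of $\Delta$.} As $\epsilon \downarrow \lambda_-$, $\Delta$ tends to $g(1) - g(0)$ evaluated at $\epsilon = \lambda_-$, which is $\le 0$ by the (R1) inequality; I need it to be strictly negative, which the series check should confirm. As $\epsilon \uparrow 1$, the integrand $f(\lambda_{\mathrm{cut}}) - f(\epsilon) \to f(\lambda_{\mathrm{cut}}) - 1 \ge 0$ and is positive on a set of positive measure, so $\Delta > 0$ near $\epsilon = 1$. The intermediate value theorem then yields a unique $\epsilon_* \in (\lambda_-, 1)$. For $\epsilon \ge 1$, $f(\lambda_{\mathrm{cut}}) \ge 1$ cannot fall below $f(\epsilon)$ except on a region where $g$ is still above $g(0)$; arguing as in (R1), this gives collapse, so (R3) covers all $\epsilon \ge \epsilon_*$.

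The main obstacle is the inequality $\int f\,p_{\mathrm{MP}} < f(\lambda_-)$ on all of $\alpha \in (0,1)$. This single fact drives both (R1) and the sign of $\Delta$ at the left end. My first attempt will be to reduce it via the explicit moments to an elementary inequality in $\sqrt\alpha$; if that fails, I will compare $\int f\,p_{\mathrm{MP}}$ against $\max_{[\lambda_-,\lambda_+]} f = \max\bigl(f(\lambda_-), f(\lambda_+)\bigr)$, which would only settle the case where $f(\lambda_-)$ is the larger of the two.
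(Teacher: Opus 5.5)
For $\epsilon\in(\lambda_-,1)$ your argument matches the paper's. This covers the $(+)\to(-)\to(+)$ sign pattern of $g'$, the exclusion of $r=1$, the envelope identity $\Delta'(\epsilon)=-r_\epsilon f'(\epsilon)>0$, and the signs of $\Delta$ at the two ends. Your justification of $\Delta>0$ as $\epsilon\uparrow 1$ is more explicit than the paper's.

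\emph{First, the step that fails is your extension of collapse to $\epsilon\ge 1$, which is false.} For $\epsilon>1$ the second root $\lambda'$ of $f(\lambda)=f(\epsilon)$ lies in $(0,1)$. Hence $g'=f(\lambda_{\mathrm{cut}})-f(\epsilon)<0$ on the whole window $\lambda_{\mathrm{cut}}\in(\lambda',\epsilon)$, which for large $\epsilon$ covers most of the MP bulk. Concretely, take $\alpha=1/2$. Your own moment formulas give $g(1)=\int f\,p_{\mathrm{MP}}\,d\lambda-1=\log 2\approx 0.69$, whereas at $\epsilon=5$ one has $g(0)=f(5)-1\approx 0.81$. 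So $r=1$ beats $r=0$; moreover $g(0)\to\infty$ as $\epsilon\to\infty$ while $g(1)$ stays fixed. Collapse survives only somewhat beyond $\epsilon=1$ (at $\epsilon=1$ itself $g'=f(\lambda_{\mathrm{cut}})-1\ge 0$).

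Consequently (R3), read literally for every $\epsilon\ge\epsilon_*$, cannot be proved. The paper's proof establishes monotonicity of $\Delta$ only on $(\lambda_-,1)$, and so tacitly restricts to $\epsilon<1$, the regime of Theorem~\ref{thm:optimal-cutoff}. You should make that restriction explicit rather than assert collapse for $\epsilon\ge 1$.

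\emph{Second, the inequality you call the main obstacle is left unproved, but your own fallback settles it completely.} The reason is that $f(\lambda_+)<f(\lambda_-)$ for every $\alpha\in(0,1)$. With $x=\sqrt{\alpha}$, the difference $f(\lambda_-)-f(\lambda_+)=4x/(1-x^2)^2-2\log\frac{1+x}{1-x}$ vanishes at $x=0$ and has $x$-derivative $4x^2(5-x^2)/(1-x^2)^3>0$. By strict convexity, $f(\lambda)<f(\lambda_-)\le f(\epsilon)$ for all $\lambda\in(\lambda_-,\lambda_+]$.

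This gives both things you need:
\begin{itemize}
\item In (R1), $g'\le 0$ on all of $(0,1)$, so $r^*=1$ immediately. The $+\to-$ case never occurs, since $\widetilde\lambda>\lambda_+$.
\item The same pointwise bound yields the strict inequality $\int f\,p_{\mathrm{MP}}\,d\lambda<f(\lambda_-)$, which you need for $\Delta<0$ as $\epsilon\downarrow\lambda_-$.
\end{itemize}
This pointwise endpoint comparison is exactly the paper's route; the MP moment formulas and series checks are unnecessary.
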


\begin{proof}
On the half-line $r = 0$ the trained model degenerates to
$\mathcal{N}(0,\, \epsilon I_{\NV})$ and (\ref{eq:gausKL}) gives
$\mathrm{GE}(0) = \tfrac{1}{2}\NV(1/\epsilon + \log\epsilon - 1)$,
the right-hand side of (\ref{eq:eps-star}). On the half-line
$r = 1$ the model is the eigen-MLE and one verifies from
(\ref{eq:dGEdr}) that
$\mathrm{GE}(1) - \mathrm{GE}(\epsilon)$ has the sign of
$\epsilon - \lambda_{-}(\alpha)$ (the MP integrand has support
on $[\lambda_{-},\lambda_{+}]$).

(R1) Suppose $\epsilon \leq \lambda_{-}(\alpha)$. We show that
$f(\lambda) \leq f(\epsilon)$ for every $\lambda \in
[\lambda_{-}, \lambda_{+}]$, so that the right-hand side of
(\ref{eq:dGEdr}) is non-positive on $r \in (0,1)$ and the GE is
non-increasing in $r$, attaining its minimum at $r^{*} = 1$.

Since $f$ is strictly convex on $(0, \infty)$ with global
minimum $f(1) = 1$, the maximum of $f$ on the closed interval
$[\lambda_{-}, \lambda_{+}]$ is attained at one of the
endpoints. For the lower endpoint we have $\epsilon \leq
\lambda_{-} < 1$, and $f$ is strictly decreasing on $(0, 1)$,
so $f(\lambda_{-}) \leq f(\epsilon)$. For the upper endpoint
we use the explicit form $\lambda_{\pm} = (1 \pm \sqrt{\alpha})^{2}$:
writing $x := \sqrt{\alpha} \in (0, 1)$, a direct computation
gives
\begin{equation*}
  f(\lambda_{-}) - f(\lambda_{+})
  \;=\;
  \frac{4x}{(1-x^{2})^{2}}
  \,-\, 2\log\!\frac{1+x}{1-x},
\end{equation*}
which vanishes at $x = 0$ and whose derivative in $x$ is
\begin{equation*}
  \frac{d}{dx}\bigl[f(\lambda_{-}) - f(\lambda_{+})\bigr]
  \;=\;
  \frac{4x^{2}(5 - x^{2})}{(1 - x^{2})^{3}},
\end{equation*}
strictly positive on $(0, 1)$ since the numerator
$4x^{2}(5 - x^{2})$ and the denominator $(1 - x^{2})^{3}$ are
both positive on that interval. Hence
$f(\lambda_{-}) > f(\lambda_{+})$ for every
$\alpha \in (0, 1)$. Combining, $f(\lambda_{+}) <
f(\lambda_{-}) \leq f(\epsilon)$, so $f \leq f(\epsilon)$
throughout the MP support, as claimed.

(R2)--(R3) Suppose $\epsilon > \lambda_{-}(\alpha)$. Then the
interior critical point $\lambda_{\mathrm{cut}}^{*} = \epsilon$
of Theorem~\ref{thm:optimal-cutoff} lies in the MP support and
is a strict local minimum on the open interval. By continuity
of (\ref{eq:GE-decomp-asymp}) on $[0,1]$, the global minimizer
is one of $\{0,\, r_{\epsilon},\, 1\}$, where $r_{\epsilon}$
is the interior critical point. The case $r^{*} = 1$ is
excluded for $\epsilon > \lambda_{-}$ by an analogous endpoint
analysis to that of (R1) above (the trajectory of
$f(\lambda_{\mathrm{cut}}) - f(\epsilon)$ as $r$ approaches $1$
is now strictly positive), so $r^{*} \in \{0, r_{\epsilon}\}$.

We compare $\mathrm{GE}_{\mathrm{int}}(\epsilon) :=
\mathrm{GE}(r_{\epsilon},\epsilon)$ and $\mathrm{GE}(0)$ as
functions of $\epsilon$. Two observations simplify the
computation. First, for fixed $r$, the data-bias term
(\ref{eq:DB-r}) depends on $\epsilon$ only through the
coupling $r = \int_{\lambda_{\mathrm{cut}}}^{\lambda_{+}}
p_{\mathrm{MP}}(\lambda)\,d\lambda$, so at fixed $r$ the cutoff
$\lambda_{\mathrm{cut}}$ and the integrals
$I_{-1}(\lambda_{\mathrm{cut}})$,
$I_{\log}(\lambda_{\mathrm{cut}})$ are all
$\epsilon$-independent, giving
$\partial(\text{Data bias})/\partial\epsilon\big|_{r} = 0$.
Second, by the first-order optimality condition
$\partial\mathrm{GE}/\partial r\big|_{r=r_{\epsilon}} = 0$, the
envelope theorem eliminates the $dr_{\epsilon}/d\epsilon$
contribution. Combining,
\begin{equation*}
  \frac{d\,\mathrm{GE}_{\mathrm{int}}(\epsilon)}{d\epsilon}
  \;=\;
  \frac{\partial \mathrm{ME}}{\partial \epsilon}\bigg|_{r = r_{\epsilon}}
  \;=\;
  \frac{\NV}{2}\,(1 - r_{\epsilon})\,f'(\epsilon),
\end{equation*}
while $\frac{d}{d\epsilon}\mathrm{GE}(0) = \frac{\NV}{2}\,f'(\epsilon)$.
Subtracting,
\begin{equation*}
  \frac{d}{d\epsilon}\big[\mathrm{GE}_{\mathrm{int}}(\epsilon) - \mathrm{GE}(0)\big]
  \;=\;
  -\,\frac{\NV}{2}\,r_{\epsilon}\,f'(\epsilon)
  \;>\; 0
\end{equation*}
on $(\lambda_{-}, 1)$, since $r_{\epsilon} > 0$ and
$f'(\epsilon) = (\epsilon - 1)/\epsilon^{2} < 0$. (Note that
both $\mathrm{GE}_{\mathrm{int}}$ and $\mathrm{GE}(0)$ are
themselves \emph{decreasing} in $\epsilon$; their difference
increases because $\mathrm{GE}(0)$ decreases faster.)
The difference is therefore strictly increasing in $\epsilon$
on $(\lambda_{-}, 1)$, is negative at $\epsilon = \lambda_{-}$
(where $r_{\epsilon} = 1$ and $\mathrm{GE}_{\mathrm{int}}(\lambda_{-}) =
\mathrm{GE}(r{=}1) < \mathrm{GE}(0)|_{\epsilon = \lambda_{-}}$
by the same upper-endpoint calculation as in (R1)), and is
positive as $\epsilon \to 1^{-}$. Hence it changes sign
exactly once at the unique $\epsilon_{*}(\alpha)$ defined by
(\ref{eq:eps-star}). For $\epsilon < \epsilon_{*}$ the interior
wins, giving (R2); for $\epsilon \geq \epsilon_{*}$ the
boundary wins, giving (R3).
\qed
\end{proof}

\begin{remark}
The collapse threshold $\epsilon_{*}(\alpha)$ has a simple but
counter-intuitive operational meaning. A naive expectation
would be that, since $\epsilon < 1$ in the regime of interest
and the pure-noise model $\mathcal{N}(0, \epsilon I_{\NV})$ is
already a strictly misspecified approximation of the truth,
learning \emph{any} positive number of empirical eigendirections
from the data should improve over not learning at all---if only
slightly. Proposition~\ref{prop:phase} shows that this
expectation is wrong above $\epsilon_{*}(\alpha)$: the
finite-sample over-confidence cost of fitting any positive rank
\emph{exceeds} the model-error reduction it provides, so that
the optimal trained model has $\NK^{*} = 0$ and uses \emph{no}
information from the training data. The boundary
$\epsilon_{*}(\alpha)$ is therefore the largest noise floor for
which the data still contain enough information to be worth
using at all. This is the $\epsilon$-PCA analogue of the
\emph{singular learning} regime of \cite{Watanabe2010}, in
which finite-sample fluctuations overwhelm the signal that the
model can extract.
\end{remark}

The phase structure of Proposition~\ref{prop:phase} is plotted
in Figure~\ref{fig:phase-diagram}, with the analytical
boundaries $\lambda_{-}(\alpha)$ and $\epsilon_{*}(\alpha)$
overlaid on a direct numerical determination of the optimal
regime by minimisation of the asymptotic GE on a grid of
$(\alpha, \epsilon)$ values. The numerical regimes (colored
markers) lie inside the analytical regions to within one grid
spacing, providing direct verification of
Proposition~\ref{prop:phase}. The interior phase covers the
bulk of the $(\alpha, \epsilon)$ rectangle and the closed-form
$\lambda_{\mathrm{cut}}^{*} = \epsilon$ rule of
Theorem~\ref{thm:optimal-cutoff} applies throughout it; the
retain-all phase shrinks rapidly with $\alpha$ (because
$\lambda_{-}(\alpha) \to 0$ as $\alpha \to 1$), and the
collapse threshold $\epsilon_{*}(\alpha)$ decreases monotonically
with $\alpha$ from $\sim 0.8$ at $\alpha = 0.1$ to $\sim 0.6$
at $\alpha = 0.9$.

\begin{figure}[!t]
  \centering
  \includegraphics[width=\linewidth]{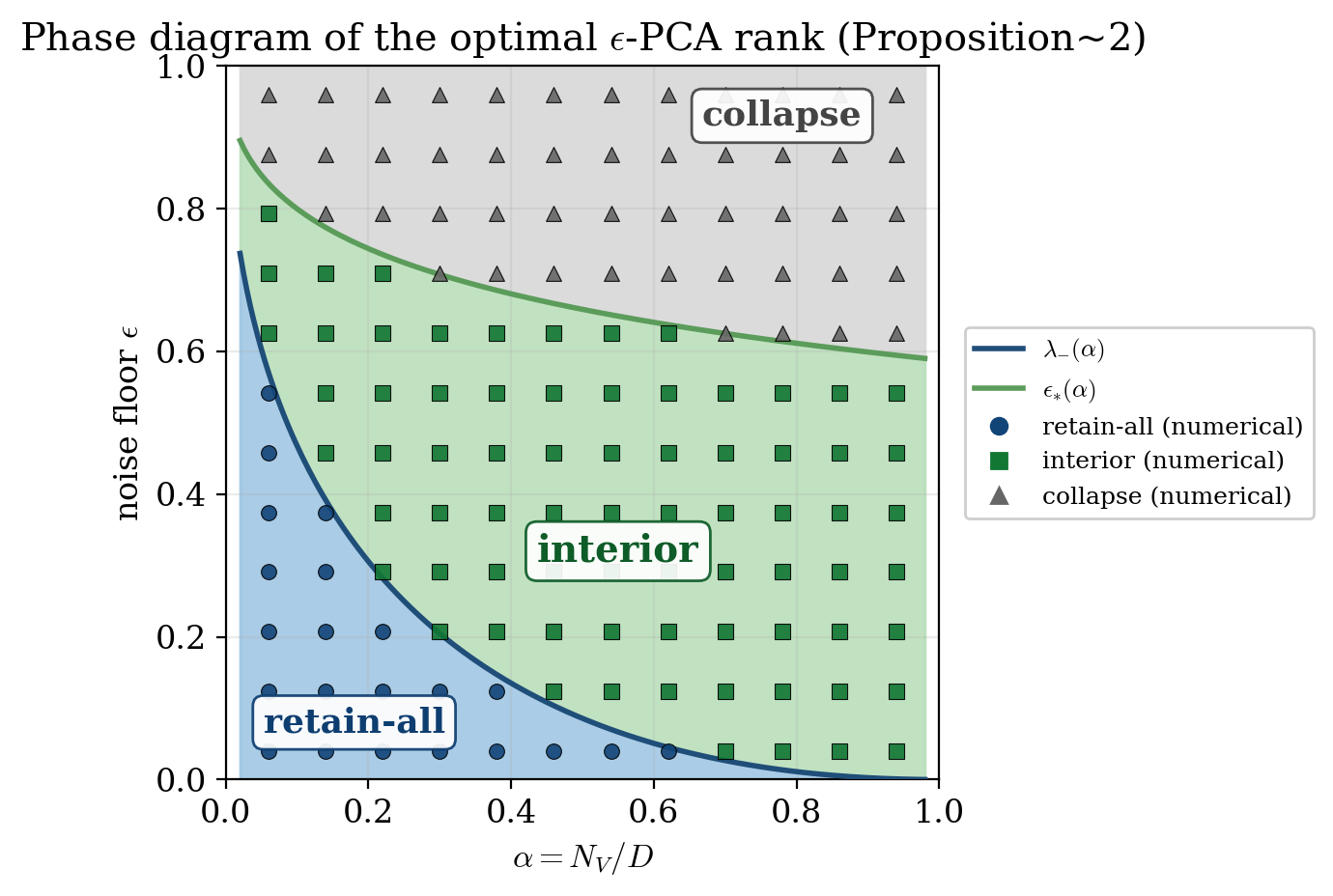}
  \caption{Three-regime phase diagram of the optimal
  $\epsilon$-PCA rank in the $(\alpha, \epsilon)$ plane
  (Proposition~\ref{prop:phase}). The lower boundary
  $\lambda_{-}(\alpha) = (1-\sqrt{\alpha})^{2}$ (dark blue
  curve) separates the retain-all phase ($\NK^{*} = \NV$, blue
  region) from the interior phase
  (Theorem~\ref{thm:optimal-cutoff}, green region); the upper
  boundary $\epsilon_{*}(\alpha)$ (green curve, computed
  numerically as the unique solution of (\ref{eq:eps-star}))
  separates the interior phase from the collapse phase
  ($\NK^{*} = 0$, gray region). Markers show the regime
  obtained by direct minimisation of the asymptotic GE
  (\ref{eq:GE-decomp-asymp}) on a $22\times 22$ grid: blue
  circles for retain-all, green squares for interior, gray
  triangles for collapse. The numerical regimes match the
  analytical boundaries to within one grid spacing.}
  \label{fig:phase-diagram}
\end{figure}

\subsection{Numerical illustration}
\label{sec:epca-numerics}

Figure~\ref{fig:NKstar-eps} verifies the closed-form prediction
of Theorem~\ref{thm:optimal-cutoff} against direct Wishart
sampling. For an isotropic Gaussian source
$\PV = \mathcal{N}(0, I_{\NV})$ at $\NV = 64$, $D = 96$
($\alpha = \NV / D = 2/3$), and $\epsilon = 0.5$, we form $800$
independent empirical covariance matrices $\Sigma^{m}$, build the
\emph{textbook} eigen-$\epsilon$-PCA model
$\QVm = U^{m}D^{m}(U^{m})^{\top}$ of (\ref{eq:QVm-block}) for
each candidate rank $\NK \in \{0,1,\ldots,\NV\}$ using the
empirical eigenvectors $U^{m}$, and average
$\KL(\PV \,\|\, \QVm)$ over realisations. The KL is evaluated
in the full $\NV \times \NV$ matrix form, so that the rotation
$U^{m}$ is used explicitly; the same realisations are also used
to compute the $\diamond$-model GE
$\langle\KL(\PV\|\widetilde{Q}^{m})\rangle_{m}$ in the standard
basis, and the two agree to machine precision
($< 10^{-14}$ in nats over the entire grid). This provides a
direct numerical verification of
Lemma~\ref{lem:rot-equiv}.

The empirical generalization error (black curve in
Fig.~\ref{fig:NKstar-eps}) is a clear U-shape, and the vertical
red dotted line marks the closed-form optimal rank $\NK^{*}$
predicted by Theorem~\ref{thm:optimal-cutoff} via the rule
``retain exactly those empirical eigenvalues exceeding the noise
floor $\epsilon$''. The empirical minimum of the GE curve
coincides with this prediction to within one grid spacing,
verifying the analytic result directly.

The same figure also reports the three components of the
$\diamond$-model decomposition: $\mathrm{ME}(r)$ (red squares,
linear in $r$, computed from the closed form (\ref{eq:ME-r}));
the data bias $\mathrm{Data\ bias}(r)$ (green triangles,
computed from the closed form (\ref{eq:DB-r})); and the
variance (blue triangles, asymptotically vanishing for the
$\diamond$-model). Their sum (gray dashed) overlays the
empirical GE curve to within $4\!\times\!10^{-14}$ in nats over
the entire range $\NK \in [0, \NV]$, providing a direct
numerical verification of Theorem~\ref{thm:3decomp} for the
$\diamond$-model. The crossing of $\mathrm{ME}(r)$ and
$\mathrm{Data\ bias}(r)$ near $\NK = \NK^{*}$ visualises the
marginal-rate balance discussed after Theorem~\ref{thm:optimal-cutoff}:
the closed-form optimum lies where the slopes of the two
components are equal in magnitude.

\begin{figure}[!t]
  \centering
  \includegraphics[width=\linewidth]{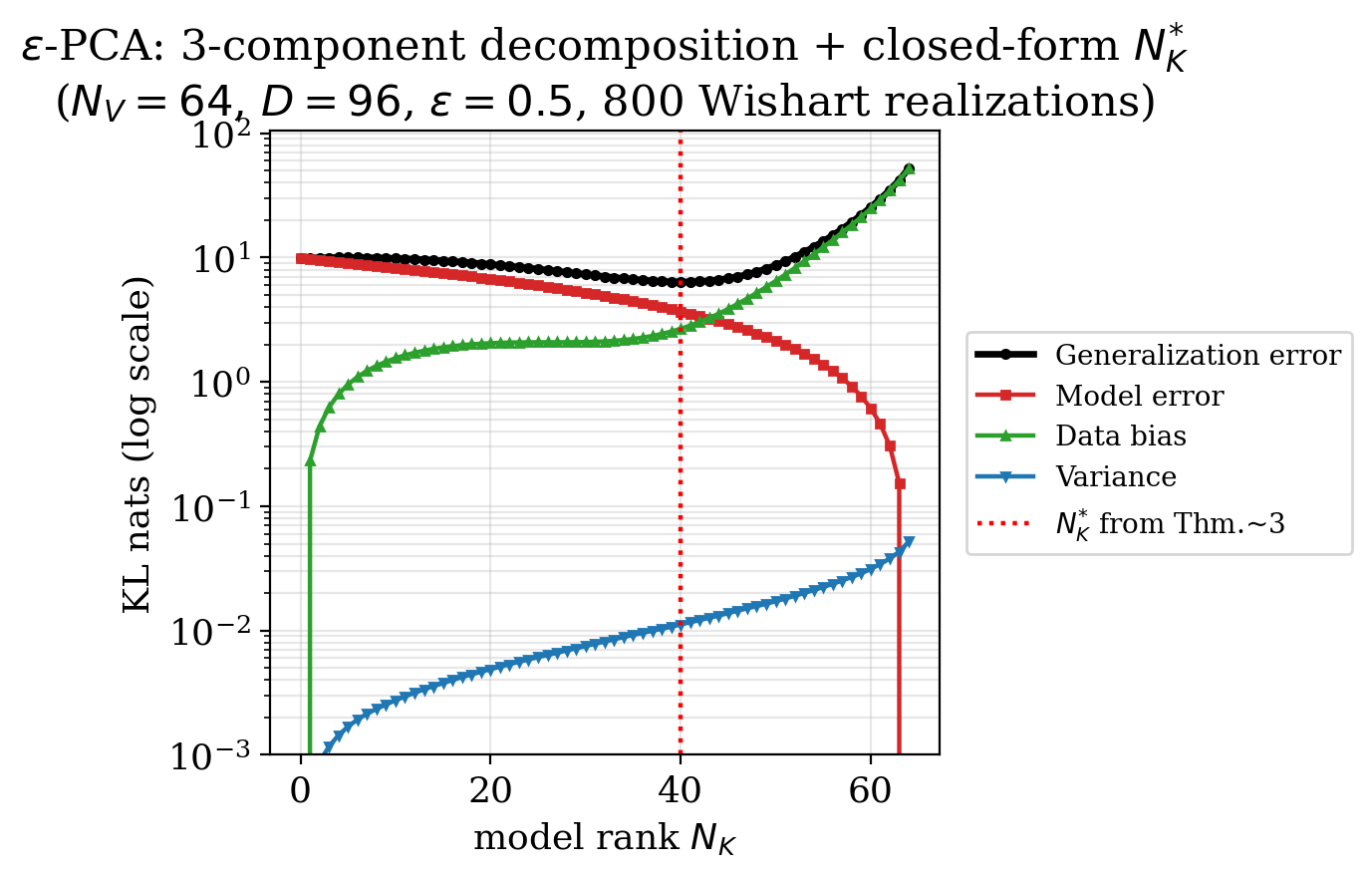}
  \caption{Joint numerical verification of
  Lemma~\ref{lem:rot-equiv},
  Theorem~\ref{thm:3decomp}, and
  Theorem~\ref{thm:optimal-cutoff}, on isotropic Gaussian data
  $\PV = \mathcal{N}(0, I_{\NV})$ at $\NV = 64$, $D = 96$
  ($\alpha = 2/3$), $\epsilon = 0.5$, sample-averaged over $800$
  Wishart realizations.
  \emph{Black:} empirical total generalization error of the
  textbook eigen-$\epsilon$-PCA model $\QVm = U^{m}D^{m}(U^{m})^{\top}$
  of (\ref{eq:QVm-block}), computed in full $\NV\times\NV$ matrix
  form (the rotation $U^{m}$ is used explicitly).
  \emph{Red, green, blue:} the three components
  $\mathrm{ME}(r)$, $\mathrm{Data\ bias}(r)$,
  $\mathrm{Variance}(r)$ of the $\diamond$-model decomposition
  (Theorem~\ref{thm:3decomp} applied to $\Mfam_{\diamond}^{\NK}$),
  computed from the closed forms (\ref{eq:ME-r}), (\ref{eq:DB-r}).
  \emph{Gray dashed:} their sum
  $\mathrm{ME} + \mathrm{Data\ bias} + \mathrm{Variance}$. The
  gray-dashed and black curves overlay each other to within
  $\sim\!10^{-14}$ in nats over the entire range
  $\NK \in [0, \NV]$, providing a direct numerical verification of
  both the additive identity of Theorem~\ref{thm:3decomp} (the
  closed-form $\diamond$-model decomposition reproduces the
  empirical GE) and of Lemma~\ref{lem:rot-equiv} (the textbook
  eigen-$\epsilon$-PCA GE coincides with the $\diamond$-model GE
  on isotropic data).
  \emph{Red dotted vertical line:} the closed-form optimal rank
  $\NK^{*}$ from Theorem~\ref{thm:optimal-cutoff} via the cutoff
  $\lambda_{\mathrm{cut}}^{*} = \epsilon$; it coincides with the
  empirical grid argmin of the GE.}
  \label{fig:NKstar-eps}
\end{figure}

\section{Information-theoretic interpretation}
\label{sec:it-interpretation}

The decomposition (\ref{eq:3decomp-thm}) of the unsupervised
generalization error has natural readings in the language of
classical information theory. We collect two of them here for
completeness; neither is required for the proofs above, but
they make explicit how the decomposition fits within the
standard tools of the Shannon-theoretic literature.

\subsection{Cross-entropy form and code-length deficits}

The KL divergence $\KL(\PV \,\|\, \QVm)$ is, by the Gibbs identity,
the difference between the cross-entropy of $\QVm$ relative to
$\PV$ and the entropy of $\PV$:
\begin{equation}
  \KL(\PV \,\|\, \QVm) \;=\; H(\PV; \QVm) - H(\PV),
  \label{eq:gibbs}
\end{equation}
where $H(\PV; \QVm) := -\int \PV \log \QVm$.
Operationally, $H(\PV; \QVm)$ is the asymptotic per-symbol
description length, in nats, achieved by encoding samples from
$\PV$ with a code matched to $\QVm$
\cite[Chap.~5]{CoverThomas2006}, while $H(\PV)$ is the optimal
code length achievable for $\PV$. The generalization error is
therefore the average code-length \emph{deficit} introduced by
using a finite-sample model in place of the true distribution.
Substituting (\ref{eq:gibbs}) into (\ref{eq:3decomp-thm}) and
isolating the $\PV$-dependent terms gives
\begin{equation}
  \big\langle H(\PV; \QVm)\big\rangle_{m}
  \;=\;
  H(\PV; \QVz)
  \;+\;
  \mathrm{Data\ bias}
  \;+\;
  \mathrm{Variance},
  \label{eq:cross-entropy-decomp}
\end{equation}
where the data bias on the right is the integral
$\int \PV \log(\QVz/\QVbar)$ of (\ref{eq:databias-alg}) and the
variance is $\langle \KL(\QVbar\|\QVm)\rangle_{m}$. In words, the
expected coding cost of the trained model splits into a model-class
floor $H(\PV; \QVz)$, a finite-sample bias of the m-projection,
and a fluctuation term that vanishes when training is deterministic.

\subsection{Rate--distortion view of $\epsilon$-PCA}

The $\epsilon$-PCA model with rank $\NK$ can be read as a
rate--distortion code for the latent Gaussian source $\PV =
\mathcal{N}(0,I_{\NV})$: it spends one ``natural parameter'' per
retained eigenvalue (rate $\propto \NK$) and pays a per-discarded-
direction distortion equal to $\KL\!\big(\mathcal{N}(0,1)\|
\mathcal{N}(0,\epsilon)\big) = \tfrac{1}{2}(f(\epsilon) - 1)$.
The model error (\ref{eq:ME-r}) is then exactly the total
distortion at rate $\NK$, namely
$\mathrm{ME} = \tfrac{1}{2}(\NV - \NK)(f(\epsilon) - 1)$.
The closed-form Theorem~\ref{thm:optimal-cutoff} states that,
when the finite-sample data-bias term (\ref{eq:DB-r}) is also
taken into account, the optimal rate $\NK^{*}$ is the one for
which the marginal cost of admitting one more empirical
direction into the data bias balances the marginal benefit of
removing one $\epsilon$-pinned direction from the model error,
yielding the cutoff condition
$f(\lambda_{\mathrm{cut}}^{*}) = f(\epsilon)$. This is the natural
analogue, in our generative-modelling setting, of the optimal
hard-thresholding rule of \cite{GavishDonoho2014} for low-rank
denoising; here the threshold is set by the model's intrinsic
noise floor $\epsilon$ rather than by the upper edge of the
Marchenko--Pastur support.

\section{Discussion}
\label{sec:discussion}

We have established a closed-form characterization of the optimal
model rank for $\epsilon$-PCA on isotropic Gaussian data
(Theorem~\ref{thm:optimal-cutoff}): the unique interior local
minimum of the generalization error sits at the cutoff
$\lambda_{\mathrm{cut}}^{*} = \epsilon$, so that the optimal model
retains precisely those empirical covariance eigenvalues that
exceed the intrinsic noise floor. The first-order condition
$f(\lambda_{\mathrm{cut}}^{*}) = f(\epsilon)$ with
$f(x) = 1/x + \log x$ also admits a second root
$\widetilde{\lambda} > 1$, but this root is a local
\emph{maximum} of GE on the kept branch rather than a minimum.
A comparison of the interior local-minimum value with the $r = 0$
boundary value of GE yields a sharp three-regime phase structure
for the global optimum (Proposition~\ref{prop:phase}), with the
lower Marchenko--Pastur edge $\lambda_{-}(\alpha)$ separating the
retain-all phase from the interior phase, and an analytically
computable collapse threshold $\epsilon_{*}(\alpha)$ separating
the interior phase from the collapse phase. The closed-form
$\lambda_{\mathrm{cut}}^{*} = \epsilon$ rule and the corresponding
phase diagram are the natural KL-divergence analogues, in a
generative setting, of the spectral truncation rules of
\cite{GavishDonoho2014,Veraart2016MPPCA}.

The framework underlying these closed-form results is an
information-geometric three-component decomposition of the
unsupervised GE, which formalises and extends the empirical
two-component tradeoff of \cite{Kim2023JSTAT}: the data error
splits, via the generalized Pythagorean theorem combined with a
dual variance identity for the e-mixture, into a finite-sample
data bias and a training-stochasticity variance, with each of
the three components non-negative whenever the model manifold
is e-flat (Theorem~\ref{thm:3decomp}). When the model manifold
fails to be e-flat---for instance, for visible marginals of
hidden-variable models or for rank-constrained Gaussian models
such as the eigen-$\epsilon$-PCA class
(\ref{eq:epca-class})---the data bias is no longer forced to be
non-negative, and the framework cannot be invoked directly. We
have shown how to circumvent this for $\epsilon$-PCA itself by a
technical reformulation (Lemma~\ref{lem:rot-equiv}) that places
the trained model on an e-flat sub-family with the same total
GE on isotropic data; the closed-form
Theorem~\ref{thm:optimal-cutoff} is then recovered as a direct
consequence of Theorem~\ref{thm:3decomp}, with the cutoff
condition emerging as a marginal-rate balance between the model
error and the data bias.

More broadly, the framework developed here suggests several
lines of further inquiry that we leave open.

\paragraph{Spike model and anisotropic data.}
The closed-form Theorem~\ref{thm:optimal-cutoff} relies on
isotropy of $\PV$ in two specific places: in
Lemma~\ref{lem:rot-equiv}, which uses the conjugation
invariance of $\KL(I \,\|\, \cdot)$ to transfer the
$\diamond$-model decomposition back to eigen-$\epsilon$-PCA;
and in the m-projection $\widetilde{Q}^{0}$ of $\PV$ onto
$\Mfam_{\diamond}^{\NK}$, which is diagonal with entries $1$ in
the kept block. For a spiked covariance
$\Sigma_{0} = I + \sum_{j=1}^{r} \theta_{j}\,u_{j}u_{j}^{\top}$,
neither step survives literally: the conjugation invariance is
broken by the spike directions, and the m-projection inherits
the spike eigenvalues in the kept block. We expect that the
correct generalisation has a cutoff condition of the form
$f(\lambda_{\mathrm{cut}}^{*}) = f(\sigma_{j}^{2})$, where
$\sigma_{j}^{2}$ is the projected variance of $\PV$ along the
$j$-th surviving spike direction, modified by the BBP transition
\cite{BaikBenArousPeche2005}: spikes whose strength
$\theta_{j}$ exceeds the BBP threshold $\sqrt{\alpha}$ remain
detectable in the empirical spectrum and shift the cutoff
upward, while sub-threshold spikes are absorbed into the bulk
and recover the isotropic answer
$\lambda_{\mathrm{cut}}^{*} = \epsilon$. Establishing this
precisely requires the joint spectral law of spiked Wishart
matrices \cite{BaikBenArousPeche2005,Johnstone2001}, which we
leave for future work. A fully anisotropic generalisation
(arbitrary $\Sigma_{0}$ with no spike structure) is more
delicate and would naturally invoke the free-probability
machinery for the limiting spectrum of
$\Sigma_{0}^{1/2} W \Sigma_{0}^{1/2}$ with $W$ Wishart.

\paragraph{Sharpness of the obstruction.}
Proposition~\ref{prop:negative} shows that the data bias of
Theorem~\ref{thm:3decomp} can become negative whenever the
model manifold fails to be e-flat, but does not say
\emph{when}. Whether negativity occurs for every non-affine
constraint or only for constraints of a specific
type---rank-like, equality-like, or low-dimensional
sub-manifold---would turn the data bias into a quantitative
geometric diagnostic in the sense of
\cite{Watanabe2010,Grunwald2007MDL}, with concrete tests for
the e-flat regime of complex generative models.

\paragraph{Application to fully visible Boltzmann machines.}
The most directly testable extension of the framework is to the
canonical \emph{strict} e-flat fully visible example: the fully
visible Boltzmann machine, whose model class is itself an
exponential family in its natural parameters and therefore
satisfies Theorem~\ref{thm:3decomp} without any technical
reformulation of the kind used in Section~\ref{sec:variant4}.
Such an implementation has long been impractical because the
partition function is intractable and classical Markov chain
Monte Carlo sampling converges slowly; recent work on
\emph{diabatic quantum annealing} for Boltzmann sampling
\cite{Gyhm2024DQABoltzmann} provides a hardware-accelerated
alternative that opens the door to a direct numerical
verification of the U-shaped GE decomposition in this
discrete-state setting, and---through extensions to
latent-variable priors---also to the case in which
Proposition~\ref{prop:negative}'s obstruction is expected to
manifest.

\section*{Acknowledgment}
%

\bibliographystyle{plain}
\bibliography{refs}

\end{document}